\newtheorem{theorem}{Theorem}[section]
\newtheorem{corollary}{Corollary}[theorem]
\newtheorem{lemma}[theorem]{Lemma}
\newtheorem{definition}[theorem]{Definition}
\newcommand\DoToC{%
  \startcontents
  \printcontents{}{2}{\textbf{Contents}\vskip3pt\hrule\vskip5pt}
  \vskip3pt\hrule\vskip5pt
}
\title{Towards training digitally-tied analog blocks \\
via hybrid gradient computation}
\author{%
  Timothy~Nest\thanks{Montreal Institute of Learning Algorithms (MILA)}$\ \  ^{\blacklozenge}$ \\
  \texttt{timothy.nest@mila.quebec} \\
  \And
  Maxence~Ernoult\thanks{Rain AI}$\ \  ^{\blacklozenge}$  \\
  \texttt{maxence@rain.ai}
}
\begin{document}

\maketitle

\begin{abstract}
\def\thefootnote{$\blacklozenge$}\footnotetext{Equal contribution}\def\thefootnote{\arabic{footnote}}
Power efficiency is plateauing in the standard digital electronics realm such that novel hardware, models, and algorithms are needed to reduce the costs of AI training. The combination of energy-based analog circuits and the Equilibrium Propagation (EP) algorithm constitutes one compelling alternative compute paradigm for gradient-based optimization of neural nets. Existing analog hardware accelerators, however, typically incorporate digital circuitry to sustain auxiliary non-weight-stationary operations, mitigate analog device imperfections, and leverage existing digital accelerators.This heterogeneous hardware approach calls for a new theoretical model building block. In this work, we introduce \emph{Feedforward-tied Energy-based Models} (ff-EBMs), a hybrid model comprising feedforward and energy-based blocks accounting for digital and analog circuits. We derive a novel algorithm to compute gradients end-to-end in ff-EBMs by backpropagating and “eq-propagating'' through feedforward and energy-based parts respectively, enabling EP to be applied to much more flexible and realistic architectures. We experimentally demonstrate the effectiveness of the proposed approach on ff-EBMs where Deep Hopfield Networks (DHNs) are used as energy-based blocks. We first show that a standard DHN can be arbitrarily split into any uniform size while maintaining performance. We then train ff-EBMs on ImageNet32 where we establish new SOTA performance in the EP literature (46 top-1 \%). Our approach offers a principled, scalable, and incremental roadmap to gradually integrate self-trainable analog computational primitives into existing digital accelerators.
\end{abstract}

\section{Introduction}
Gradient-based optimization, the cornerstone and most energy greedy component of deep learning, fundamentally relies upon three factors: i) highly parallel digital hardware such as GPUs, ii) feedforward models and iii) backprop (BP). With skyrocketing demands of AI compute, cutting the energy consumption of AI systems, learning has become a economical, societal and environmental stake \citep{strubell2020energy} and calls for the exploration of novel compute paradigms \citep{thompson2020computational,scellier2021deep, stern2023learning}.

One promising path towards this goal is analog in-memory computing  \citep{sebastian2020memory}: when mapping weights onto a crossbar of resistive devices, Kirchoff current and voltage laws inherently achieve matrix-vector multiplications in constant time complexity \citep{cosemans2019towards}.
Stacking multiple such crossbars, an entire neural network can be mapped onto a physical system. An important formalism for such systems is that of \emph{energy-based} (EB) analog circuits \citep{kendall2020,stern2023physical,dillavou2023transistor, scellier2024fast} which are ``self-learning'' systems that compute loss gradients through two relaxations to equilibrium (i.e. two ``forward passes''), a procedure falling under the umbrella of energy-based learning (EBL) algorithms \citep{scellier2024energy}. 
One of these learning algorithms, Equilibrium Propagation (EP) \citep{scellier2017equilibrium}, particularly stands out with strong theoretical guarantees, relative scalability in the realm of backprop alternatives \citep{laborieux2022holomorphic, laborieux2023improving} and experimental demonstrations on small analog systems which are $10,000\times$ more energy-efficient and substantially faster than their GPU-based counterpart \citep{yi2023activity}. This suggests an alternative triad as a new compute paradigm for gradient-based optimization: i) analog hardware, ii) EBMs, iii) EP. 
\begin{wrapfigure}[22]{l}{0.4\linewidth}
    \begin{center}
    \includegraphics[width=0.5\textwidth]{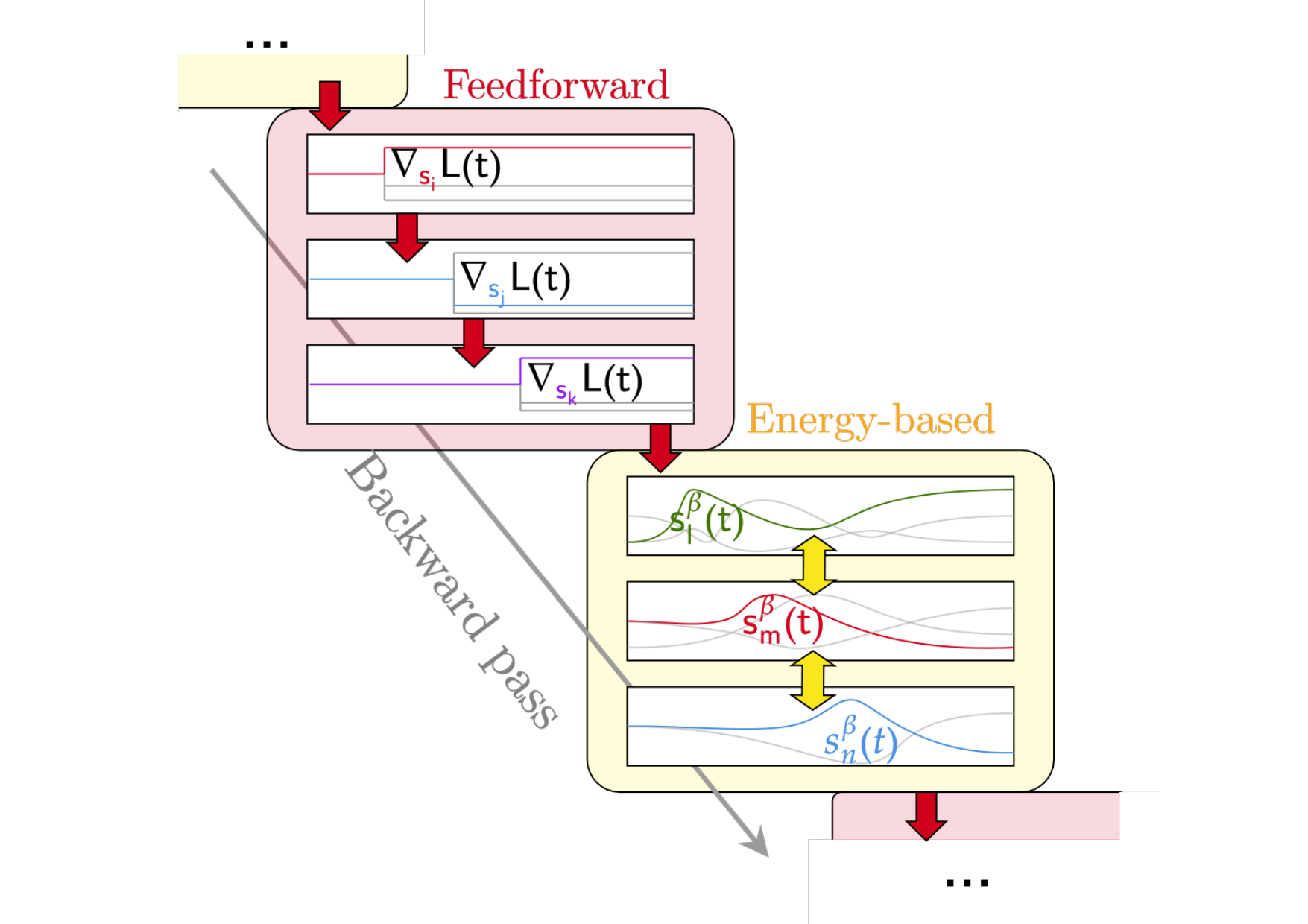}
    \end{center}
    \caption{Illustrating BP-EP backward gradient chaining through feedforward (red) and energy-based (yellow) blocks, accounting for digital and analog circuits respectively.}
\end{wrapfigure}
In this paper, we propose a theoretical framework to extend end-to-end gradient computation to a realistic setting where the system at use may or may not be \emph{fully} analog. Such a setting is plausible in the near term, due to by two major limitations. First, analog circuits exhibit many non-ideal physical behaviors which affect both the inference pathway \citep{wang2023digital, ambrogio2023analog} and parameter optimization \citep{nandakumar2020mixed, spoon2021toward, lammie2024lionheart} ,  in-turn compromising performance. Second, owing to the latency and energy-consumption of resistive devices' write operations, such analog circuits should be fully weight stationary -- weights must be written before the inference procedure begins -- which excludes many operations used conventionally in machine learning such as activation functions, normalization, attention \citep{spoon2021toward, jain2022heterogeneous, liu2023hardsea, li2023h3datten}. Therefore, analog systems are likely to be used in combination with auxiliary digital circuitry, resulting in hybrid mixed precision systems \citep{haensch2018next}. While the design of purely inferential engines made of analog and digital parts is nearing commercial maturity \citep{ambrogio2023analog}, \emph{in-situ} learning of such systems has barely been explored. One final challenge lies in proving EBL algorithms can scale in a manner comparable to backprop, given the requirement of \emph{simulating} EB systems on GPUs. Given the necessity of convergence, this amounts in practice in performing lengthy root finding algorithms to simulate physical equilibrium, limiting proof-of-concepts thereof to relatively shallow models \citep{scellier2024energy, scellier2024fast}.

Our work contends that the best of both worlds can be achieved with the following triad: i) hybrid digital \emph{and} analog hardware, ii) feedforward \emph{and} EB models, iii) BP \emph{and} EP. Namely, by modelling digital and analog parts as feedforward and EB modules respectively, the core contribution of our paper is to show how backprop and EP error signals can be chained end-to-end through feedforward and EB blocks respectively in a principled fashion. Rather than opposing digital and analog, or backprop and ``alternative'' learning algorithms as often done in the literature, we propose a novel hardware-aware building block which can, in principle, leverage advances from \emph{both} digital and analog hardware in the near-term. More specifically:

\begin{itemize}
    \item We propose \emph{Feedforward-tied Energy-based Models} (ff-EBMs, Section~\ref{subsec:ff-ebm}) as high-level models of mixed precision systems whose inference pathway read as the composition of feedforward and EB modules (Eq.~(\ref{def:deeply-nested-model}), Alg.~\ref{alg:inference-ff-ebm}). 
    \item We show that gradients in ff-EBMs can be computed in an end-to-end fashion (Section~\ref{subsec:chain-ep-bp}), backpropagating through feedforward blocks and ``eq-propagating'' through EB blocks (Theorem~\ref{theorem:main-result}, Alg.~\ref{alg:implicit-bp-ep-chain}) and that this procedure is rooted in a deeply-nested optimization problem (Section~\ref{subsec:deeply-nested}). 
    \item Finally, we experimentally demonstrate the effectiveness of our algorithm on ff-EBMs where EBM blocks are Deep Hopfield Networks (DHNs) (Section~\ref{sec:exps}). We show that i) gradient estimates computed by our algorithm (Alg.~\ref{alg:implicit-bp-ep-chain}) near perfectly match gradients computed by end-to-end automatic differentiation (Section~\ref{subsec:gdd}), ii) a standard DHN model can be arbitrarily split into a ff-DHN with the equivalent layers and architectural  layers while maintaining performance and remaining on par with automatic differentiation (Section~\ref{subsec:split-exp}), iii)  the proposed approach yields 46 \% top-1 (70\% top-5) validation accuracy on ImageNet32 when training a ff-EBM of 16 layers, thereby significantly beating EP current performance state-of-the-art by a large margin without using holomorphic transformations inside EBM blocks \citep{laborieux2022holomorphic, laborieux2023improving} 
\end{itemize}

\section{Background}

\paragraph{Notations.} Denoting $A: \mathbb{R}^n \to \mathbb{R}^m$ a differentiable mapping, we denote its \emph{total} derivative with respect to $s_j$ as $d_{s_j} A(s) := d A(s)/ ds_j \in \mathbb{R}^m$, its \emph{partial} derivative with respect to $s_j$ as $\partial_{j}A(s) := \partial A(s)/ \partial s_j \in \mathbb{R}^m$. When $A$ takes scalar values ($m=1$), its \emph{gradient} with respect to $s_j$ is denoted as $\nabla_j A(s) := \partial_j A(s)^\top$.

\subsection{Energy-based models (EBMs)}
For a given static input and set of weights, Energy-based models (EBMs) implicitly yield a  prediction through the minimization of an energy function -- as such they are a particular kind of implicit model. Namely, an EBM is defined by a (scalar) energy function $E: s, \theta, x \to E(s, \theta, x) \in \mathbb{R}$ where $x$, $s$, and $\theta$ respectively denote a static input, hidden and output neurons and model parameters, and each such tuple defines a configuration with an associated scalar energy value. Amongst all configurations for a given input $x$ and some model parameters $\theta$, the model prediction $s_\star$ is implicitly given as an equilibrium state which minimizes the energy function:
\begin{equation}
    s_\star := \arg \min_s E(s, \theta, x).
    \label{def:ebm}
\end{equation}
\subsection{Standard bilevel optimization}
Assuming that $\nabla_s^2 E(x, s_\star, \theta)$ is invertible, note that the equilibrium state $s_\star$ implicitly depends on $x$ and $\theta$ by virtue of the implicit function theorem \citep{dontchev2009implicit}. Therefore our goal when training an EBM, for instance in a supervised setting, is to adjust the model parameters $\theta$ such that $s_\star(x, \theta)$ minimizes some cost function $\ell: s, y \to \ell(s, y) \in \mathbb{R}$ where $y$ is some ground-truth label associated to $x$. More formally, this learning objective can be stated with the following \emph{bilevel optimization problem} \citep{zucchet2022beyond}:
\begin{equation}
    \min_{\theta}\mathcal{C}(x, \theta, y) := \ell(s_\star, y) \quad \textrm{s.t.} \quad  s_\star = \arg \min_s E(s, \theta, x) \nonumber.
    \label{def:bilevel-problem}
\end{equation}
Solving Eq.~(\ref{def:bilevel-problem}) in practice amounts to computing the gradient of its outer objective $\mathcal{C}(x, \theta)$ with respect to $\theta$ ($d_\theta \mathcal{C}(x, \theta)$) and then perform gradient descent over $\theta$. 

\subsection{Equilibrium Propagation (EP)}
An algorithm used to train an EBM model in the sense of Eq.~(\ref{def:bilevel-problem}) may be called an EBL algorithm \citep{scellier2024energy}. Equilibrium Propagation (EP) \citep{scellier2017equilibrium} is an EBL algorithm which computes an estimate of $d_\theta \mathcal{C}(x, \theta)$ with at least two phases. During the first phase, the model is allowed to evolve freely to $s_\star = \arg \min_s E(s, \theta, x)$. Then, the model is slightly nudged towards decreasing values of cost $\ell$ and settles to a second equilibrium state $s_\beta$. This amounts to augment the energy function $E$ by an additional term $\beta \ell(s, y)$ where $\beta \in \mathbb{R}^\star$ is called the \emph{nudging factor}. Then the weights are updated to increase the energy of $s_\star$ and decrease that of $s_\beta$, thereby ``contrasting'' these two states. More formally,
\cite{scellier2017equilibrium} prescribe in the seminal EP paper: 
\begin{equation}
    s_\beta := \arg \min_s \left[ E(s, \theta, x) + \beta\ell(s, y)\right], \quad 
    \Delta \theta ^{\rm EP} := \frac{\alpha}{\beta} \left(\nabla_2 E(s_\star, \theta, x) - \nabla_2 E(s_\beta, \theta, x) \right),
    \label{def:pos-ep-weight-update}
\end{equation}

where $\alpha$ denotes some learning rate. EP comes in different flavours depending on the sign of $\beta$ inside Eq.~(\ref{def:pos-ep-weight-update}) or on whether two nudged states of opposite nudging strengths ($\pm \beta$) are contrasted, a variant called \emph{Centered} EP (C-EP) which was shown to work best in practice \citep{laborieux2021scaling, scellier2024energy} and reads as:

\begin{equation}
    \Delta \theta ^{\rm C-EP} := \frac{\alpha}{2\beta} \left(\nabla_2 E(s_{-\beta}, \theta, x) - \nabla_2 E(s_\beta, \theta, x) \right),
    \label{def:c-ep-weight-update}
\end{equation}

\section{Tying energy-based models with feedforward blocks}
This section mirrors the background section by introducing a new model, the naturally associated optimization problem and a new learning algorithm. We first introduce \emph{Feedforward-tied EBMs} (ff-EBMs, section~\ref{subsec:ff-ebm}) which read as composition of feedforward and EB transformations (Alg.~\ref{alg:inference-ff-ebm}). We then show how optimizing ff-EBMs amounts to solving a multi-level optimization problem (Section~\ref{subsec:deeply-nested}) and propose a BP-EP gradient chaining algorithm as a solution (Section~\ref{subsec:chain-ep-bp}, Theorem~\ref{theorem:main-result}, Alg.~\ref{alg:implicit-bp-ep-chain}). We highlight as an edge case that ff-EBMs reduce to standard feedforward nets (Lemma~\ref{lma:ff}) and the proposed BP-EP gradient chaining algorithm to standard BP (Corollary~\ref{cor:ff-bp}) when each EB block comprises a single hidden layer.

\subsection{Feedforward-tied Energy-based Models (ff-EBMs)}
\label{subsec:ff-ebm}

\paragraph{Inference procedure.} We define \emph{Feedforward-tied Energy-based Models} (ff-EBMs) as compositions of feedforward and EB transformations. Namely, an data sample $x$ is fed into the first feedforward transformation $F^1$ parametrized by some weights $\omega^1$, which yields an output $x^1_\star$. Then, $x^1_\star$ is fed as a static input into the first EB block $E^1$ with parameters $\theta^1$, which relaxes to an equilibrium state $s^1_\star$. $s^1_\star$ is in turn fed into the next feedforward transformation $F^1$ with weights $\omega^1$ and the above procedure repeats until reaching the output layer $\hat{o}$. More formally, denoting $F^k$ and $E^k$ the k$^{\rm th}$ feedforward and EB blocks parametrized by the weights $\omega^k$ and $\theta^k$ respectively, the inference pathway of a ff-EBM reads as:

\begin{align}
\left\{
    \begin{array}{l}
    s^0 := x \\
    x^k_\star := F^k(s^{k-1}_\star, \omega^k), \quad s^k_\star := \underset{s}{\arg\min} \ E^k(s, \theta^k, x^k_\star) \quad \forall k = 1 \cdots N-1\\
    \hat{o}_\star := F^N(s^{N-1}_\star, \omega^N)
    \end{array}
\right.
\label{def:deeply-nested-model}
\end{align}

ff-EBM inference procedure is depicted more compactly inside Fig.~\ref{fig:fwd-bwd} (left) and Alg.~\ref{alg:inference-ff-ebm}. 


\begin{minipage}[c]{0.46\linewidth}
\includegraphics[width=\linewidth]{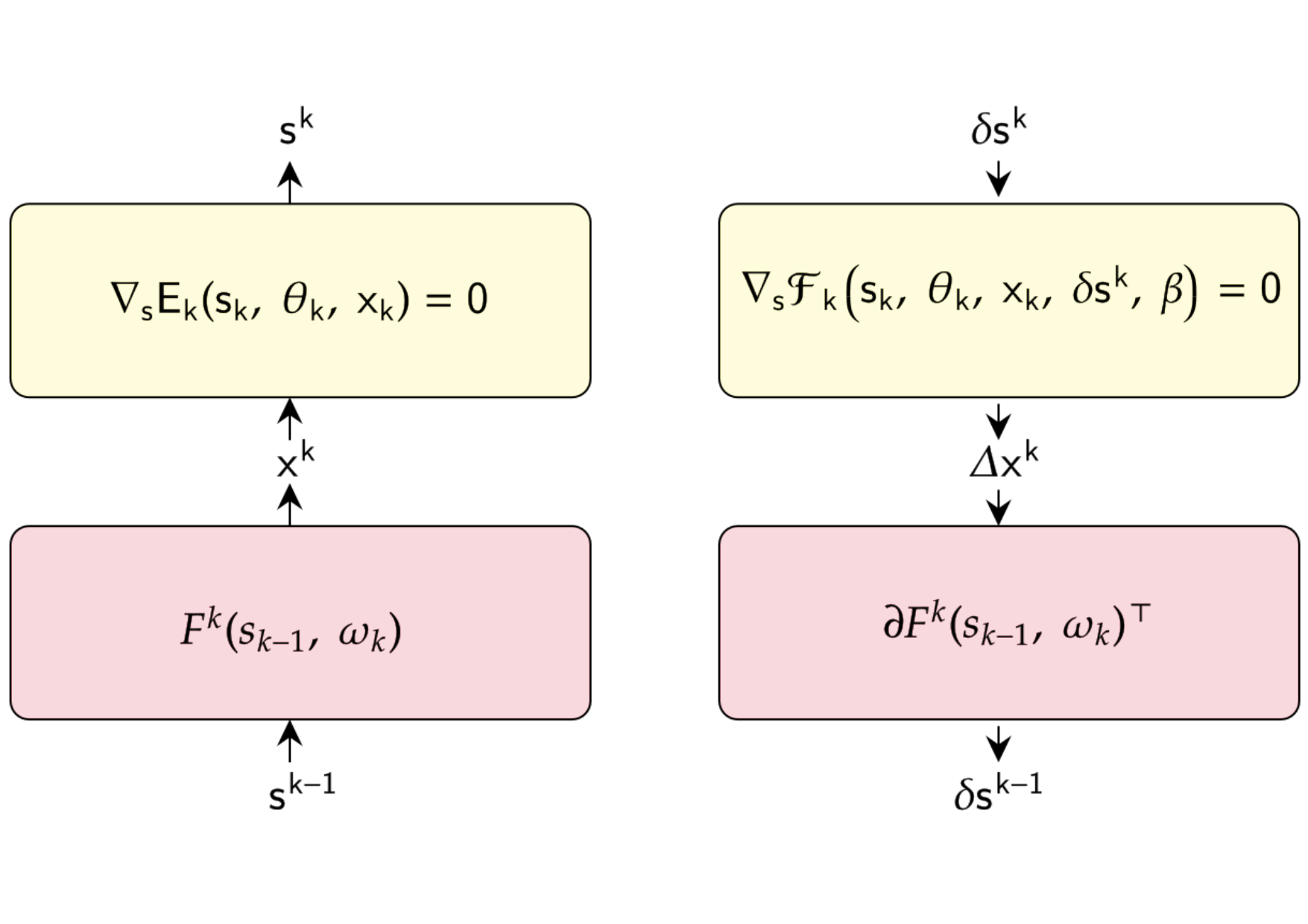}
\captionof{figure}{Depiction of the forward (left) and backward (right) pathways through a ff-EBM, with yellow and pink blocks denoting EB and feedforward transformations.}
\label{fig:fwd-bwd}
\end{minipage}
\hfill
\begin{minipage}[c]{0.5\linewidth}
\begin{algorithm}[H]
    \caption{ff-EBM inference (Eq.~(\ref{def:deeply-nested-model}))}
    \label{alg:inference-ff-ebm}
    \begin{algorithmic}[1]
    \State $s \gets x$
    \For{$k=1 \cdots N - 1$}
        \State $x \gets F^k\left(s, \omega^k\right)$
        \State $s \gets \underset{s}{\mathbf{Optim}}\left[E^k(s, \theta^k, x)\right]$
    \EndFor
    \State $\hat{o} \gets F^{N}(s, \omega^N)$
\end{algorithmic}
\end{algorithm}
\end{minipage}

\paragraph{Form of the energy functions.} We specify further the form of the energy of the k$^{\rm th}$ EB block of a ff-EBM as defined per Eq.~(\ref{def:deeply-nested-model}). The associated energy function $E^k$ takes some static input $x^k$ from the output of the preceding feedforward transformation, has hidden neurons $s^k$ and is parametrized by weights $\theta^k$ and more precisely defined as:

\begin{equation}
    E^k(s^k, \theta^k, x^k) := G^k(s^k) - s^{k^\top} \cdot x^k + U^k(s^k, \theta^k)
    \label{def:energy-block-k-version-1}
\end{equation}

Eq.~(\ref{def:energy-block-k-version-1}) reveals three different contributions to the energy. The first term determines the non-linearity applied inside the EB block \citep{zhang2017convergent, hoier2023dual}: for a given invertible and continuous activation function $\sigma$, $G$ is defined such that $\nabla G = \sigma^{-1}$ (see Appendix~\ref{subsec:details-model}).

The second term inside Eq.~(\ref{def:energy-block-k-version-1}) accounts for a purely feedforward contribution from the previous feedforward block $F^k$. Finally, the third term accounts for \emph{internal} interactions within the layers of the EB block. 

\paragraph{Recovering a feedforward net.} When taking the gradient of $E^k$ as defined in Eq.~(\ref{def:energy-block-k-version-1}) with respect to $s^k$ and zeroing it out, it can be seen that $s^k_\star$ is implicitly defined as:

\begin{equation}
    s^k_\star := \sigma\left(x^k - \nabla_1 U^k(s^k_\star, \theta^k) \right)
    \label{eq:steady-state-block-k}
\end{equation}

An interesting edge case highlighted by Eq.~(\ref{eq:steady-state-block-k}) is when $U^k = 0$ for all $k$'s, i.e. when there are no intra-block layer interactions, or equivalently when the EB block comprises a single layer only. In this case, $s_\star^k$ is simply a feedforward mapping $x^k$ through $\sigma$ and in turn the ff-EBM is simply a standard feedforward architecture (see Lemma~\ref{lma:ff} inside Appendix~\ref{app:ff-ebm}).

\subsection{Multi-level optimization of ff-EBMs}
\label{subsec:deeply-nested}

In the same way as learning EBMs can naturally be cast into a bilevel optimization problem, learning ff-EBMs can be inherently be mapped into a \emph{multi-level} optimization problem where the variables being optimized over in the inner subproblems are the EB block variables $s^1, \cdots, s^{N-1}$. To make this clearer, we re-write the energy function of the k$^{\rm th}$ block $E^k$ from Eq.~(\ref{def:energy-block-k-version-1}) to highlight the dependence between two consecutive EB block states:

\begin{equation}
    \widetilde{E}^k(s^k, \theta^k, s^{k-1}_\star, \omega^k) := E^k\left(s^k, \theta^k, F^k\left(s^{k-1}_\star, \omega^{k-1} \right)\right)
    \label{def:energy-block-k-version-2}
\end{equation}

It can be seen from Eq.~(\ref{def:energy-block-k-version-2}) that the equilibrium state $s^k_\star$ obtained by minimizing $E^k$ will be dependent upon the equilibrium state $s^{k-1}_\star$ of the previous EB block, which propagates back through prior EB blocks. Denoting $W := \{\theta^1, \cdots, \theta^{N-1}, \omega^1, \cdots, \omega^N\}$, the learning problem for a ff-EBM can therefore be written as:


\begin{align}
    \min_{W}\ &\mathcal{C}(x, W, y) :=\ell(\hat{o}_\star = F^N(s^{N-1}_\star, \omega^N), y) \label{def:multilevel-optim} \\
    \text{s.t.} &\quad s^{N-1}_\star = \underset{s}{\arg\min} \ \widetilde{E}^{N-1}(s, \theta^{N-1}, s^{N-2}_\star, \omega^{N-1}) \quad \cdots \quad \text{s.t.} \quad s^1_\star = \underset{s}{\arg\min} \ \widetilde{E}^1(s, \theta^1, x, \omega^1) \nonumber
\end{align}

Here again and similarly to bilevel optimization, solving Eq.~(\ref{def:multilevel-optim}) in practice amounts to computing $  g_{\theta^k} := d_{\theta^k} \mathcal{C}$ and  $g_{\omega^k} := d_{\omega^k} \mathcal{C}$ and perform gradient descent on $\theta^k$ and $\omega^k$.

\subsection{A BP--EP gradient chaining algorithm}
\label{subsec:chain-ep-bp}

\paragraph{Main result: explicit BP-EP chaining.} Based on the multilevel optimization formulation of ff-EBMs learning in Eq.~(\ref{def:multilevel-optim}), we state the main theoretical result of this paper in Theorem~\ref{theorem:main-result} (see proof in Appendix~\ref{app:proof-main-thm}). 

\begin{theorem}[Informal]
\label{theorem:main-result}
Assuming a model of the form Eq.~(\ref{def:deeply-nested-model}), we denote $s^1_\star, x^1_\star, \cdots, s^{N-1}_\star, \hat{o}_\star$ the states computed during the forward pass as depicted in Alg.~\ref{alg:inference-ff-ebm}. 
We define the nudged state of block $k$, denoted as $s^k_\beta$,  implicitly through $\nabla_1 \mathcal{F}^k(s^k_\beta, \theta^k, x^k_\star, \delta s^k, \beta) = 0$ with:
\begin{equation}
\mathcal{F}^k(s^k, \theta^k, x^k_\star, \delta s^k, \beta) := E^k(s^k, \theta^k, x^k_\star) + \beta s^{k^\top}\cdot \delta s^k
\end{equation}

Denoting $\delta s^k$ and $\Delta x^k$ the error signals computed at the input of the feedforward block $F^k$ and of the EB block $E^k$ respectively, then the following chain rule applies:

\begin{align}
&\delta s^{N-1} := \nabla_{s^{N-1}}\ell(\hat{o}_\star, y), \quad g_{\omega^N} = \nabla_{\omega^N} \ell(\hat{o}_\star, y) \label{eq:main-result:1}\\
&\forall k=2 \cdots N-1: \nonumber\\
&\left\{
    \begin{array}{l}
    \Delta x^k = \left.d_{\beta}\left(\nabla_{3}E^k (s^k_\beta, \theta^k, x^k_\star)\right)\right|_{\beta=0}, \quad g_{\theta^k} = \left.d_{\beta}\left(\nabla_{2}E^k (s^k_\beta, \theta^k, x^k_\star)\right)\right|_{\beta=0} \\
    \delta s^{k-1} = \partial_1 F^k\left(s^{k-1}_\star, \omega^k \right)^\top \cdot \Delta x^k, \quad g_{\omega^k} = \partial_2 F^k\left(s^{k-1}_\star, \omega^k\right)^\top \cdot \Delta x^k
    \end{array}
\right.
\label{eq:main-result:2}
\end{align}
\end{theorem}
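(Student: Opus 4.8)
The plan is to prove the chain rule by backward induction on the block index $k$, carrying the invariant that $\delta s^k = d_{s^k_\star}\mathcal{C}$, i.e. that the error signal entering block $k$ is exactly the total derivative of the outer objective $\mathcal{C}$ of Eq.~(\ref{def:multilevel-optim}) with respect to the equilibrium state $s^k_\star$, viewed as a free input feeding the downstream computation. Alongside this invariant I would show that the stated formulas recover $g_{\theta^k} = d_{\theta^k}\mathcal{C}$ and $g_{\omega^k} = d_{\omega^k}\mathcal{C}$. The base case is direct: $\omega^N$ enters $\mathcal{C}$ only through $F^N$ applied to $s^{N-1}_\star$, and $s^{N-1}_\star$ does not depend on $\omega^N$, so differentiating $\ell(F^N(s^{N-1}_\star,\omega^N),y)$ reproduces $\delta s^{N-1} = \nabla_{s^{N-1}}\ell(\hat o_\star,y)$ and $g_{\omega^N} = \nabla_{\omega^N}\ell(\hat o_\star,y)$ as in Eq.~(\ref{eq:main-result:1}).

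For the inductive step I would treat the feedforward and energy-based halves of each block separately. The feedforward half is ordinary backpropagation: because $x^k_\star = F^k(s^{k-1}_\star,\omega^k)$ and both $s^{k-1}_\star$ and $\omega^k$ influence $\mathcal{C}$ only through $x^k_\star$, the chain rule gives $d_{s^{k-1}_\star}\mathcal{C} = \partial_1 F^k(s^{k-1}_\star,\omega^k)^\top\cdot d_{x^k_\star}\mathcal{C}$ and $d_{\omega^k}\mathcal{C} = \partial_2 F^k(s^{k-1}_\star,\omega^k)^\top\cdot d_{x^k_\star}\mathcal{C}$, matching the second line of Eq.~(\ref{eq:main-result:2}) once $\Delta x^k$ has been identified with $d_{x^k_\star}\mathcal{C}$. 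The entire substance of the theorem therefore collapses onto the energy-based half, namely showing that the EP nudging machinery transports $\delta s^k = d_{s^k_\star}\mathcal{C}$ backward through the implicit map $x^k_\star\mapsto s^k_\star$ to produce both $\Delta x^k = d_{x^k_\star}\mathcal{C}$ and $g_{\theta^k} = d_{\theta^k}\mathcal{C}$.

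This energy-based step is where the nudged-state differentials must be reconciled with the implicit-function-theorem gradients. Writing $H := \nabla_1^2 E^k(s^k_\star,\theta^k,x^k_\star)$ and differentiating the nudged stationarity condition $\nabla_1 E^k(s^k_\beta,\theta^k,x^k_\star) + \beta\,\delta s^k = 0$ with respect to $\beta$ at $\beta=0$ gives $d_\beta s^k_\beta|_{\beta=0} = -H^{-1}\delta s^k$. The specific energy form of Eq.~(\ref{def:energy-block-k-version-1}), for which $\nabla_3 E^k = -s^k$ and $\nabla_2 E^k = \nabla_2 U^k$, then turns the two EP differentials into $\Delta x^k = -d_\beta s^k_\beta|_{\beta=0} = H^{-1}\delta s^k$ and $g_{\theta^k} = \partial_1\nabla_2 E^k\cdot d_\beta s^k_\beta|_{\beta=0}$. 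On the other side, applying the implicit function theorem directly to $\nabla_1 E^k(s^k_\star,\theta^k,x^k_\star)=0$ yields $\partial_{x^k_\star}s^k_\star = H^{-1}$ and $\partial_{\theta^k}s^k_\star = -H^{-1}\partial_2\nabla_1 E^k$, whence $d_{x^k_\star}\mathcal{C} = (\partial_{x^k_\star}s^k_\star)^\top\delta s^k$ and $d_{\theta^k}\mathcal{C} = (\partial_{\theta^k}s^k_\star)^\top\delta s^k$. Matching the two pairs of expressions then requires only that $H$ is symmetric and that the mixed second derivatives satisfy $\partial_1\nabla_2 E^k = (\partial_2\nabla_1 E^k)^\top$ by Schwarz's theorem.

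I expect the main obstacle to be conceptual bookkeeping rather than hard analysis: one must argue cleanly that the linear nudge $\beta\, s^{k^\top}\!\cdot\delta s^k$ faithfully injects the \emph{accumulated} downstream sensitivity into block $k$, and that because EP reads off only the first-order behaviour as $\beta\to 0$, this linear surrogate for the true downstream cost is exact rather than approximate — so the induction composes without error across the alternating blocks. The supporting regularity hypothesis is invertibility of each Hessian $\nabla_1^2 E^k$ at equilibrium (already invoked for the implicit function theorem in the background), which simultaneously guarantees differentiability of $s^k_\beta$ in $\beta$ near $0$ and existence of $H^{-1}$; granting this, the matching above closes the induction and establishes the end-to-end BP--EP chain rule of Eqs.~(\ref{eq:main-result:1})--(\ref{eq:main-result:2}).
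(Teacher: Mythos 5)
Your proposal is correct, and it reaches the result by a genuinely different route from the paper. The paper first recasts the multilevel problem via a Lagrangian, derives KKT conditions for the adjoint variables $\lambda^k_\star$ (Lemma~\ref{lma:lagrangian}), and then proves by backward induction that EP's nudged-state derivative recovers these multipliers, $\lambda^k_\star = d_\beta s^k_\beta|_{\beta=0}$ (Lemma~\ref{lma:lagrangian-ep}), before translating $\widetilde{E}^k$-quantities into $E^k$- and $F^k$-quantities. You instead dispense with multipliers entirely and run a single backward induction on the invariant $\delta s^k = d_{s^k_\star}\mathcal{C}$, matching the EP differential $d_\beta s^k_\beta|_{\beta=0} = -H^{-1}\delta s^k$ directly against the implicit-function-theorem Jacobians $\partial_{x^k_\star}s^k_\star$ and $\partial_{\theta^k}s^k_\star$, with symmetry of $H$ and Schwarz's theorem closing the gap. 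The two arguments are algebraically the same computation in different clothing --- your $-H^{-1}\delta s^k$ \emph{is} the paper's $\lambda^k_\star$ --- but the paper's version makes the link to the multilevel optimization problem of Eq.~(\ref{def:multilevel-optim}) structurally explicit and isolates the ``EP estimates the adjoint'' statement as a reusable lemma, whereas yours is more elementary and self-contained, needing only the IFT and equality of mixed partials. Two minor remarks: your identification $\nabla_3 E^k = -s^k$ specializes to the energy form of Eq.~(\ref{def:energy-block-k-version-1}), while the formal theorem holds for general twice-differentiable $E^k$ (the general case goes through identically by writing $d_\beta(\nabla_3 E^k(s^k_\beta,\cdot))|_{\beta=0} = \partial_1\nabla_3 E^k\cdot d_\beta s^k_\beta|_{\beta=0}$ and invoking Schwarz once more); and your closing worry about the linear nudge being ``exact rather than approximate'' is resolved precisely because only the first derivative at $\beta=0$ is read off, which your own IFT computation already establishes --- no additional argument is needed there.
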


\paragraph{Proposed algorithm: implicit BP-EP chaining.} Theorem~\ref{theorem:main-result} reads intuitively: it prescribes an \emph{explicit} chaining of EP error signals passing backward through $E^k$ ($\delta s^k \to \Delta x^k$) and BP error signals passing backward through $\partial F^{k^\top}$ ($\Delta x^k \to \delta s^{k-1}$), which directly mirrors the ff-EBM inference pathway as depicted in Fig.~\ref{fig:fwd-bwd}. Yet noticing that:


\begin{equation*}
\left\{
    \begin{array}{l}
   \delta s^{k-1} =  \partial_1 F^k\left(s^{k-1}_\star, \omega^k \right)^\top \cdot \Delta x^k = d_\beta \left.\left(\nabla_3 \widetilde{E}^{k}\left(s^{k}_\beta, \theta^{k}, s^{k-1}_\star, \omega^{k} \right)\right)\right|_{\beta=0}, \\
   g_{\omega^k} = \partial_2 F^k\left(s^{k-1}_\star, \omega^k \right)^\top \cdot \Delta x^k =  d_\beta \left.\left(\nabla_4 \widetilde{E}^{k}\left(s^{k}_\beta, \theta^{k}, s^{k-1}_\star, \omega^{k} \right)\right)\right|_{\beta=0},
    \end{array}
\right.
\end{equation*}

the same error signal can by passed through $\widetilde{E}^k$ ($\delta s^k \to \delta s^{k-1}$) where BP and EP are \emph{implicitly} chained inside $\widetilde{E}^k$ (see Appendix~\ref{app:proof-main-thm}). This insight, along with a centered scheme to estimate derivatives with respect to $\beta$ around 0 as done for the C-EP algorithm (Eq.~(\ref{def:c-ep-weight-update})), motivates the implicit BP-EP gradient chaining algorithm in Alg.~\ref{alg:implicit-bp-ep-chain} we used for our experiments (see Alg.~\ref{alg:explicit-bp-ep-chain} inside Appendix~\ref{app:explicit-bp-ep-chaining} for its explicit counterpart). For simplicity for here onwards and as the proposed algorithm appears to be a generalization of EP, we may refer to Alg.~\ref{alg:implicit-bp-ep-chain} as ``EP'' in the experimental section.

\begin{algorithm}[H]
    \caption{Implicit BP-EP gradient chaining (Theorem~(\ref{theorem:main-result}))}
    \label{alg:implicit-bp-ep-chain}
    \begin{algorithmic}[1]
    \State $\delta s, g_{\omega^N} \gets \nabla_{s^{N-1}}\ell(\hat{o}_\star, y), \nabla_{\omega^N} \ell(\hat{o}_\star, y) $ \Comment{Single backprop step}
    \For{$k=N - 1 \cdots 1$}
        \State $s_\beta \gets \underset{s}{\mathbf{Optim}}\left[\widetilde{E}^k(s, \theta^k, s^{k-1}_\star, \omega^k) + \beta s^\top \cdot \delta s\right]$ \Comment{EP through $\widetilde{E}^k$}
        \State $s_{-\beta} \gets \underset{s}{\mathbf{Optim}}\left[\widetilde{E}^k(s, \theta^k, s^{k-1}_\star, \omega^k) - \beta s^\top \cdot \delta s\right]$
        \State $ g_{\theta^k} \gets \frac{1}{2\beta}\left(\nabla_{2} \widetilde{E}^k(s_\beta, \theta^k, s^{k-1}_\star, \omega^k) - \nabla_{2} \widetilde{E}^k(s_{-\beta}, \theta^k, s^{k-1}_\star, \omega^k)\right)$
        \State $g_{\omega^k} \gets \frac{1}{2\beta}\left(\nabla_{4} \widetilde{E}^k(s_\beta, \theta^k, s^{k-1}_\star, \omega^k) - \nabla_{4} \widetilde{E}^k(s_{-\beta}, \theta^k, s^{k-1}_\star, \omega^k)\right)$ \Comment{i-BP through $F^k$}
    \State $ \delta s \gets \frac{1}{2\beta}\left(\nabla_{3} \widetilde{E}^k(s_\beta, \theta^k, s^{k-1}_\star, \omega^k) - \nabla_{3} \widetilde{E}^k(s_{-\beta}, \theta^k, s^{k-1}_\star, \omega^k)\right)$
    \EndFor
    \end{algorithmic}
\end{algorithm}

\paragraph{Recovering backprop.} When the ff-EBM under consideration is purely feedforward ($U^k=0$), we show that Eqs.~(\ref{eq:main-result:1})--(\ref{eq:main-result:2}) reduce to standard BP through a feedforward net (Corollary~\ref{cor:ff-bp}, Alg.~\ref{alg:implicit-bp-ep-chain-ff} and Alg.~\ref{alg:explicit-bp-ep-chain-ff} in Appendix~\ref{app:proof-main-thm}). Therefore, since this case is extremely close to standard BP through feedforward nets, we do not consider this setting in our experiments.

\section{Experiments}
\label{sec:exps}
In this section, we first present the ff-EBMs at use in our experiments (Section~\ref{subsec:exp-setup}) and carry out \emph{static} gradient analysis -- computing and analyzing ff-EBM parameter gradients for some $x$ and $y$ (Section~\ref{subsec:gdd}). We extend the observation made by \cite{ernoult2019updates} to ff-EBMs that \emph{transient} EP parameter gradients throughout the second phase match those computed by automatic differentiation through equilibrium and across blocks (Fig.~(\ref{fig:gdd})), with resulting final gradient estimates near perfectly aligned (Fig.~\ref{fig:bars}). We then show on the CIFAR-10 task that performance of ff-EBMs can be maintained across various block splits and on par with automatic differentiation while keeping the same number of layers (Section~\ref{subsec:split-exp}). Finally, we perform further ff-EBM training experiments on CIFAR-100 and ImageNet32 where we establish a new performance state-of-the-art in the EP literature (Section~\ref{subsec:scaling-exp}). 
\subsection{Setup}
\label{subsec:exp-setup}
\paragraph{Model.} Using the same notations as in Eq.~(\ref{def:energy-block-k-version-1}), the ff-EBMs at use in this section are defined through: 
\begin{equation}
\left\{
    \begin{array}{l}
U^k_{\rm FC}(s^k, \theta^k) := -\frac{1}{2}s^{k\top} \cdot \theta^k \cdot s^k, \\
U^k_{\rm CONV}(s^k, \theta^k) := -\frac{1}{2}s^k \bullet \left(\theta^k \star s^k\right)   
    \end{array}
\right. , \quad F^k(s^{k-1}, \omega^k) := {\rm BN}\left(\mathcal{P}\left(\omega^k_{\rm CONV} \star s^{k-1}_L\right); \omega^k_\alpha, \omega^k_\beta\right)
\end{equation}
with ${\rm BN}(\cdot; \omega_\alpha^k  \omega_\beta^k)$, $\mathcal{P}$ and $\star$ the batchnorm, pooling and convolution operations, $\bullet$ the generalized dot product for tensors and $s^k := \left(s^{k^\top}_1, \cdots s^{k^\top}_L\right)^\top$ the state of block $k$ comprising $L$ layers. The EBM blocks are usually called Deep Hopfield Networks (DHNs) and the weight matrix $\theta^k$ is symmetric and has a sparse, block-wise structure such that each layer $s^k_\ell$ is bidirectionally connected to its neighboring layers $s^k_{\ell - 1}$ and $s^k_{\ell + 1}$ through connections $\theta^k_{\ell -1}$ and $\theta^{k^\top}_{\ell}$ respectively (see Appendix~\ref{subsec:details-model}), either with fully connected ($U^k_{\rm FC}$) and convolutional operations ($U^k_{\rm CONV}$). Finally, the non-linearity $\sigma$ applied within EB blocks is $\sigma(x) := \min\left(\max\left(\frac{x}{2}, 0\right), 1\right)$. 

\paragraph{Equilibrium computation.} As depicted in Alg.~\ref{alg:implicit-bp-ep-chain}, the steady states $s_{\pm \beta}$ may be computed with any loss minimization algorithm. Here and as done in most past works on EP \citep{ernoult2019updates, laborieux2021scaling, laborieux2022holomorphic, scellier2024energy}, we employ a fixed-point iteration scheme to compute the EB blocks steady states. Namely, we iterate Eq.~(\ref{eq:steady-state-block-k}) until reaching equilibrium (the same scheme is used for ff-EBM inference, Alg.~\ref{alg:inference-ff-ebm}, with $\beta=0$.):
\begin{equation}
    s^k_{\pm\beta, t + 1} \gets \sigma\left(x^k - \nabla_1 U^k(s^k_{\pm\beta, t}, \theta^k) \mp \beta \delta s^k \right)
    \label{eq:fixed-point-iteration}
\end{equation}
We employ a scheme to \emph{asynchronously} update even ($s^k_{2\ell'}$) and odd ($s^k_{2\ell' + 1}$) layers \citep{scellier2024energy} -- see Appendix~\ref{subsec:details-model}. 

\paragraph{Algorithm baseline.} As an algorithmic baseline, we simply use automatic differentiation (AD) backward through the fixed-point iteration scheme Eq.~(\ref{eq:fixed-point-iteration}) with $\beta=0$ and directly initializing $s^k_{t=0} = s_\star$. This version of AD, where we \emph{backpropagate through equilibrium}, is known as ``Recurrent Backpropagation'' \citep{almeida1987learning, pineda1987generalization} or Implicit Differentiation (ID). 

\subsection{Static comparison of EP and ID on ff-EBMs}
In order to study the \emph{transient dynamics} of ID and EP, we define, with $W^k := \{\theta^k, \omega^k\}$:
\begin{equation}
\left\{
    \begin{array}{l}
    \widehat{g}^{\rm ID}_{W^k}(t) := \sum_{k=0}^T d_{W^k(T - k)} \mathcal{C}(x, W, y), \\
    \widehat{g}^{\rm EP}_{W^k}(t) := \frac{1}{2\beta}\left(\nabla_{W^k}\widetilde{E}^k(s^k_\beta(t), W^k, s^{k-1}_\star) - \nabla_{W^k}\widetilde{E}^k(s^k_{-\beta}(t), W^k, s^{k-1}_\star)\right),
    \end{array}
\right.
\end{equation}
where $s^{\pm \beta}(t)$ is computed from Eq.~(\ref{eq:fixed-point-iteration}) with the nudging error current $\delta s^k$ computed with Alg.~\ref{alg:implicit-bp-ep-chain}, and $T$ is the total number of iterations used for both ID and EP in the gradient computation phase.
\label{subsec:gdd}
\begin{figure}[ht!]
    \begin{center}
    \includegraphics[width=\textwidth]{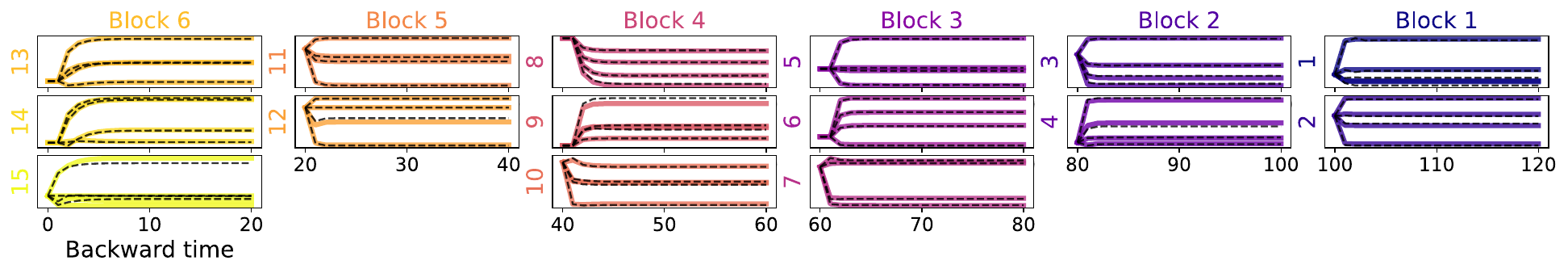}
    \end{center}
    \caption{EP and ID partially computed gradients ($(\widehat{g}_{w}^{\rm EP}(t))_{t \geq 0}$ in black dotted curves and $(\widehat{g}_{w}^{\rm ID}(t))_{t \geq 0}$ in plain colored curves) going \emph{backward through equilibrium} for ID and \emph{forward through the nudging phase} for EP \citep{ernoult2019updates} for a random sample $x$ and associated label $y$. The ff-EBM comprises 6 blocks and 15 layers in total, with block sizes of either 2 or 3 layers. Each subpanel represents a layer (labelled on the y-axis) with each curve corresponding to a randomly selected weight. ``Backward'' time is indexed from $t=0$ to $T=120$, starting from block 6 backward to block 1,  with 20 fixed-point iteration dynamics (Eq.~(\ref{eq:fixed-point-iteration})) being used for both EP and ID within each EB block.}
\label{fig:gdd}
\end{figure}

 For a given block $k$, $d_{W^k(T - k)} \mathcal{C}(x, W, y)$ is the ``sensitivity'' of the loss $\mathcal{C}$ to parameter $W^k$ at timestep $T - k$ so that $\widehat{g}^{\rm AD}_{W^k}(t)$ is a ID gradient \emph{truncated} at $T-t$. Similarly, $\widehat{g}^{\rm EP}_{W^k}(t)$ is an EP gradient truncated at $t$ steps forward through the nudged phase. When $T$ is sufficiently large, $\widehat{g}^{\rm ID}_{W^k}(T)$ and $\widehat{g}^{\rm EP}_{W^k}(T)$ converge to $d_{W^k}\mathcal{C}(x, W, y)$. Fig.~\ref{fig:gdd} displays $(\widehat{g}^{\rm ID}_{W^k}(t))_{t \geq 0}$ and $(\widehat{g}^{\rm EP}_{W^k}(t))_{t \geq 0}$ on an heterogeneous ff-EBM of 6 blocks and 15 layers with blocks comprising 2 or 3 layers for a randomly selected sample $x$ and its associated label $y$ -- see caption for a detailed description. It can be seen EP and ID error weight gradients qualitatively match very well throughout time, across layers and blocks. More quantitatively, we display the cosine similarity between the final EP and ID weight gradient estimate $\widehat{g}^{\rm ID}_{W^k}(T)$ and $\widehat{g}^{\rm EP}_{W^k}(T)$ for each layer and observe that EP and ID weight gradients are near perfectly aligned.

\subsection{Splitting experiment}
\label{subsec:split-exp}

For a given EBM (standard, single block) and a \emph{fixed} number of layers, we ask whether block splitting of this EBM into a ff-EBM with multiple EB blocks affects training performance. We address this question with two different depths ($L=6$ and $L=12$ layers in total) and various block splits maintaining a total number of layers (e.g. for $L=6$,  1 block of 6 layers, 2 blocks of 3 layers, etc.) and display the results obtained on the CIFAR-10 task inside Table~\ref{table:split}. We observe that the performance achieved by EP on the 6-layers deep EBM is maintained across 4 different block splits between 89\% and 90\% and is consistently on par with the ID baseline for each ff-EBM and with the literature on EBMs of same depth \citep{scellier2024energy, laborieux2022holomorphic}. Similarly, we observe that the performance achieved by EP on ff-EBMs with a total of 12 layers is maintained around $92\%$ with three different block sizes, matches ID performance on each of these and surpasses EP state-of-the art on CIFAR-10 \citep{scellier2024energy}. Overall these results suggest the agnosticity of ff-EBMs to EB block sizes and therefore appear to be flexible in design. 

\begin{figure}[ht!]
\begin{minipage}[c]{0.46\linewidth}
\includegraphics[width=\linewidth]{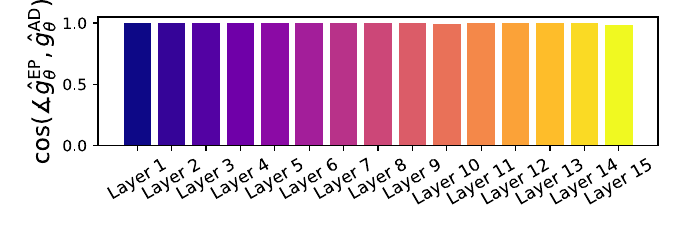}
\captionof{figure}{Cosine similarity between EP and ID weight gradients on a randomly selected sample $x$ and associated label $y$ in the same setting as Fig.~\ref{fig:gdd} using the same color code to label the layers. We observe near-perfect alignment between EP and ID gradients.}
\label{fig:bars}
\end{minipage}
\hfill
\begin{minipage}[c]{0.5\linewidth}
\centering
\captionof{table}{Validation accuracy and Wall Clock Time (WCT) obtained on CIFAR-10 by EP (Alg.~\ref{alg:implicit-bp-ep-chain}) and ID on models with different number of layers ($L$) and block sizes (``bs''). 3 seeds are used. }
  \begin{tabular}{lcccc}
\toprule
& \multicolumn{2}{c}{EP} & \multicolumn{2}{c}{ID} \\
& Top-1 (\%) & WCT & Top-1 (\%) & WCT \\
\cmidrule(r){2-5}
{\bfseries L =6} & & & & \\
bs=6 & 88.8$^{\pm 0.2}$  &  8:06 & 87.3 $^{\pm0.6} $  & 8:05  \\
bs=3 &  89.5$^{\pm 0.2}$ &  8:01 & 89.2$^{\pm 0.2}$ & 7:40  \\
bs=2 &  90.1$^{\pm 0.2}$  &  7:47& 90.0 $^{\pm 0.2}$   & 7:18 \\
\cmidrule(r){2-5}
{\bfseries L =12} & & & & \\
bs=4 & 91.6$^{\pm 0.1}$  &  7:49 & 91.6$^{\pm 0.1}$ & 7:08  \\
bs=3 &  {\bfseries 92.2}$^{\pm 0.2}$  &  6:06 & 92.2$^{\pm 0.1}$ & 5:59  \\
bs=2 & 91.7$^{\pm 0.2}$  &  6:10& 91.8$^{\pm 0.1}$ & 6:08  \\
\bottomrule
\label{table:split}
\end{tabular}
\end{minipage}
\end{figure}


\subsection{Scaling experiment}
\label{subsec:scaling-exp}
Finally, unlike Section~\ref{subsec:split-exp}, we now consider ff-EBMs of fixed block size 2 and train them with two different depths ($L=12$ and $L=15)$ on CIFAR-100 and ImageNet32 by EP and ID and show the results obtained in Table~\ref{tab:scaling}. Here again we observe that EP matches ID performance on all models and tasks, ff-EBMs benefit from depth, and the performance obtained by training the 16-layers deep ff-EBM by EP exceeds state-of-the-art performance on ImageNet32 by around 10\% top-1 validation accuracy \citep{laborieux2022holomorphic} and by around 5\% the best performance reported on this benchmark among backprop alternatives \citep{hoier2023dual}. 

\begin{table}[ht!]
\centering
\caption{Validation accuracy and Wall Clock Time (WCT) obtained on CIFAR100 and ImageNet32 by EP and Autodiff on models with different number of layers ($L$) and a block size of 2 (bs=2). Experiments are ran across 3 different seeds. We compare our results against best published results on ImageNet32 by EP \citep{laborieux2022holomorphic} and amongst all backprop alternatives \citep{hoier2023dual}.}
\begin{tabular}{llcccccc}
\toprule
& & \multicolumn{3}{c}{EP} & \multicolumn{3}{c}{ID} \\
& & Top-1 (\%) & Top-5 (\%) &  WCT & Top-1 (\%) & Top-5 (\%) & WCT \\
\cmidrule(r){3-8}
\multirow{2}{*}{CIFAR100}& L=12 & 69.3 $^{\pm 0.2}$ & 89.9 $^{\pm 0.5}$& 4:33 & 69.2$^{\pm 0.1}$ & 90.0 $^{\pm 0.2}$ &  4:16\\
&L=15 & 71.2$^{\pm 0.2}$ & 90.2$^{\pm 1.2}$ &  2:54 & 71.1$^{\pm 0.3}$ & 90.9 $^{\pm 0.1}$  &  2:44 \\
\cmidrule(r){1-2}
\multirow{2}{*}{ImageNet32}&L=12 & 44.7 $^{\pm 0.1}$ & 61:00 $^{\pm 0.1}$ &65:23 &44.7 $^{\pm 0.6}$ & 68.9$^{\pm 0.6}$ &  57:00 \\
&L=15 & {\bfseries 46.0} $^{\pm 0.1}$ & {\bfseries 70.0} $^{\pm 0.2}$& 46:00 & 45.5 $^{\pm 0.1}$ &  69.0 $^{\pm 0.1}$  & 40:01 \\
\midrule
\midrule
\multicolumn{2}{c}{\cite{laborieux2022holomorphic}}& 36.5 & 60.8& --& -- & --  & -- \\
\multicolumn{2}{c}{\cite{hoier2023dual}}& 41.5 & 64.9& --& -- & --  & -- \\
\bottomrule
\end{tabular}
\label{tab:scaling}
\end{table}

\section{Discussion}

\paragraph{Related work. } Since fixed-point iteration schemes were proposed to facilitate EP experiments \citep{ernoult2019updates, laborieux2021scaling}, there is a growing body of work revolving around algorithmic extensions of EP and assessments of its scalability on vision tasks. Most notably, \cite{laborieux2022holomorphic} introduced a holomorphic version of EP where loss gradients are computed with adiabatic oscillations of the model through nudging in the complex plane, and
was very recently extended to more general implicit models \citep{laborieux2023improving}. Moving further towards physical implementations of EP, \cite{scellier2022agnostic} proposed a fully black-box version of EP where details about the system may not be known. All these advances could be readily applied inside our EP-BP chaining algorithm to EB blocks. The work closest to ours, albeit with a purely theoretical motivation and without clear algorithmic prescriptions, is that of \cite{zach2021bilevel} where feedforward model learning is cast into a deeply nested optimization where consecutive layers are tied by elemental pair-wise energy functions, which more recently inspired the Dual Propagation algorithm \citep{hoier2023dual}. This setting can be construed as a particular case of ff-EBM learning by EP where each EB block comprises a \emph{single} layer ($U^k = 0$ inside Eq.~(\ref{def:energy-block-k-version-1}) which, however, remains extremely similar to BP (see last paragraph of Section~\ref{subsec:chain-ep-bp}).

\paragraph{Limitations and future work.} Since our recipe advocates EP--BP chaining by construction, it is fair to say that ff-EBM learning partially inherits the pitfalls of BP.
Fortunately, nothing prevents feedforward modules inside ff-EBMs to be trained by \emph{any} BP alternative to mitigate specific issues. For instance: BP can be parameterized by feedback weights to obviate weight transport from the inference circuit to the gradient computation circuit \citep{akrout2019deep}; BP gradients can be approximated as finite differences of feedback operators \citep{ernoult2022towards}; or computed via implicit forward-mode differentiation by applying random weight perturbations in the inference circuit \citep{hiratani2022stability, fournier2023can, malladi2023fine}; local layer-wise self-supervised or supervised loss functions can be used to prevent ``backward locking'' \citep{belilovsky2019greedy,ren2022scaling,  hinton2022forward}. This insight may help exploring many variants of ff-EBM training.


Pursuing the core motivation of this work, one natural extension of this study is to incorporate \emph{more hardware realism into ff-EBMs}. Beyond Deep Hopfield networks, Deep Resistive Nets (DRNs) -- concurrently developed by \cite{scellier2024fast} and strongly inspired by \cite{kendall2020} -- are exact models of idealized analog circuits, are fast to simulate and were shown to be trainable by EP. As such, using DRNs as EB blocks inside ff-EBMs is an exciting research direction. Yet, going further into analog hardware modeling for ff-EBMs comes with new challenges when taking into account device non-idealities which may affect the inference pathway, such as analog-to-digital and digital-to-analog noise \citep{rasch2023hardware, lammie2024lionheart}.

Finally, considerable work is needed to prove ff-EBM further at scale on more difficult tasks (e.g. standard ImageNet), considerably deeper architectures and beyond vision tasks. One other exciting research direction would be the design of \emph{ff-EBM based transformers}, with attention layers being chained with energy-based fully connected layers inside attention blocks.

\paragraph{Concluding remarks and broader impact.} We show that ff-EBMs constitute a novel framework for deep-learning in heterogeneous hardware settings. We hope that the  algorithm proposed can help to move beyond the typical division between digital \emph{versus} analog or BP \emph{versus} BP-free algorithms and that the greater energy-efficiency afforded by this framework provides a more pragmatic, near-term blueprint to mitigate the dramatic carbon footprint of AI training \citep{strubell2020energy}. Being still a long way from fully analog training accelerators at commercial maturity, we believe this work offers an incremental and sustainable roadmap to gradually integrate analog, energy-based computational primitives as they are developed into existing digital accelerators. 

\clearpage

\section*{Acknowledgements and disclosure of funding}
The authors warmly thank Irina Rish, Jack Kendall and Suhas Kumar for their support of the project idea from the very start, as well as Gregory Kollmer and Mohammed Fouda for useful feedback on the manuscript. TN acknowledges the support from the Canada Excellence Research Chairs Program, as well as CIFAR and Union Neurosciences et Intelligence Artificielle Quebec (UNIQUE). This research was enabled by the computational resources provided by the Summit supercomputer, awarded through the Frontier DD allocation and INCITE 2023 program for the project "Scalable Foundation Models for Transferable Generalist AI" and SummitPlus allocation in 2024. These resources were supplied by the Oak Ridge Leadership Computing Facility at the Oak Ridge National Laboratory, with support from the Office of Science of the U.S. Department of Energy. ME acknowledges funding from Rain AI which commercializes technologies based on brain-inspired learning algorithms, as well as Constance Castres Saint-Martin for her unwavering support at the maternity hospital where most of this manuscript was written. 

\bibliographystyle{abbrvnat}
\bibliography{biblio}

\begin{thebibliography}{50}
\providecommand{\natexlab}[1]{#1}
\providecommand{\url}[1]{\texttt{#1}}
\expandafter\ifx\csname urlstyle\endcsname\relax
  \providecommand{\doi}[1]{doi: #1}\else
  \providecommand{\doi}{doi: \begingroup \urlstyle{rm}\Url}\fi

\bibitem[Agarwala and Schoenholz(2022)]{agarwala2022deep}
A.~Agarwala and S.~S. Schoenholz.
\newblock Deep equilibrium networks are sensitive to initialization statistics.
\newblock In \emph{International Conference on Machine Learning}, pages 136--160. PMLR, 2022.

\bibitem[Akrout et~al.(2019)Akrout, Wilson, Humphreys, Lillicrap, and Tweed]{akrout2019deep}
M.~Akrout, C.~Wilson, P.~Humphreys, T.~Lillicrap, and D.~B. Tweed.
\newblock Deep learning without weight transport.
\newblock \emph{Advances in neural information processing systems}, 32, 2019.

\bibitem[Almeida(1987)]{almeida1987learning}
L.~B. Almeida.
\newblock A learning rule for asynchronous perceptrons with feedback in a combinatorial environment.
\newblock In \emph{Proceedings, 1st First International Conference on Neural Networks}, volume~2, pages 609--618. IEEE, 1987.

\bibitem[Ambrogio et~al.(2023)Ambrogio, Narayanan, Okazaki, Fasoli, Mackin, Hosokawa, Nomura, Yasuda, Chen, Friz, et~al.]{ambrogio2023analog}
S.~Ambrogio, P.~Narayanan, A.~Okazaki, A.~Fasoli, C.~Mackin, K.~Hosokawa, A.~Nomura, T.~Yasuda, A.~Chen, A.~Friz, et~al.
\newblock An analog-ai chip for energy-efficient speech recognition and transcription.
\newblock \emph{Nature}, 620\penalty0 (7975):\penalty0 768--775, 2023.

\bibitem[Bai et~al.(2019)Bai, Kolter, and Koltun]{bai2019deep}
S.~Bai, J.~Z. Kolter, and V.~Koltun.
\newblock Deep equilibrium models.
\newblock \emph{Advances in neural information processing systems}, 32, 2019.

\bibitem[Belilovsky et~al.(2019)Belilovsky, Eickenberg, and Oyallon]{belilovsky2019greedy}
E.~Belilovsky, M.~Eickenberg, and E.~Oyallon.
\newblock Greedy layerwise learning can scale to imagenet.
\newblock In \emph{International conference on machine learning}, pages 583--593. PMLR, 2019.

\bibitem[Chrabaszcz et~al.(2017)Chrabaszcz, Loshchilov, and Hutter]{chrabaszcz2017downsampled}
P.~Chrabaszcz, I.~Loshchilov, and F.~Hutter.
\newblock A downsampled variant of imagenet as an alternative to the cifar datasets.
\newblock \emph{arXiv preprint arXiv:1707.08819}, 2017.

\bibitem[Cosemans et~al.(2019)Cosemans, Verhoef, Doevenspeck, Papistas, Catthoor, Debacker, Mallik, and Verkest]{cosemans2019towards}
S.~Cosemans, B.~Verhoef, J.~Doevenspeck, I.~Papistas, F.~Catthoor, P.~Debacker, A.~Mallik, and D.~Verkest.
\newblock Towards 10000tops/w dnn inference with analog in-memory computing--a circuit blueprint, device options and requirements.
\newblock In \emph{2019 IEEE International Electron Devices Meeting (IEDM)}, pages 22--2. IEEE, 2019.

\bibitem[Dillavou et~al.(2023)Dillavou, Beyer, Stern, Miskin, Liu, and Durian]{dillavou2023transistor}
S.~Dillavou, B.~Beyer, M.~Stern, M.~Miskin, A.~Liu, and D.~Durian.
\newblock Transistor-based self-learning networks.
\newblock In \emph{APS March Meeting Abstracts}, volume 2023, pages D07--006, 2023.

\bibitem[Dontchev et~al.(2009)Dontchev, Rockafellar, and Rockafellar]{dontchev2009implicit}
A.~L. Dontchev, R.~T. Rockafellar, and R.~T. Rockafellar.
\newblock \emph{Implicit functions and solution mappings: A view from variational analysis}, volume 616.
\newblock Springer, 2009.

\bibitem[Ernoult et~al.(2019)Ernoult, Grollier, Querlioz, Bengio, and Scellier]{ernoult2019updates}
M.~Ernoult, J.~Grollier, D.~Querlioz, Y.~Bengio, and B.~Scellier.
\newblock Updates of equilibrium prop match gradients of backprop through time in an rnn with static input.
\newblock \emph{Advances in neural information processing systems}, 32, 2019.

\bibitem[Ernoult et~al.(2022)Ernoult, Normandin, Moudgil, Spinney, Belilovsky, Rish, Richards, and Bengio]{ernoult2022towards}
M.~M. Ernoult, F.~Normandin, A.~Moudgil, S.~Spinney, E.~Belilovsky, I.~Rish, B.~Richards, and Y.~Bengio.
\newblock Towards scaling difference target propagation by learning backprop targets.
\newblock In \emph{International Conference on Machine Learning}, pages 5968--5987. PMLR, 2022.

\bibitem[Fournier et~al.(2023)Fournier, Rivaud, Belilovsky, Eickenberg, and Oyallon]{fournier2023can}
L.~Fournier, S.~Rivaud, E.~Belilovsky, M.~Eickenberg, and E.~Oyallon.
\newblock Can forward gradient match backpropagation?
\newblock In \emph{International Conference on Machine Learning}, pages 10249--10264. PMLR, 2023.

\bibitem[Haensch et~al.(2018)Haensch, Gokmen, and Puri]{haensch2018next}
W.~Haensch, T.~Gokmen, and R.~Puri.
\newblock The next generation of deep learning hardware: Analog computing.
\newblock \emph{Proceedings of the IEEE}, 107\penalty0 (1):\penalty0 108--122, 2018.

\bibitem[Hinton(2022)]{hinton2022forward}
G.~Hinton.
\newblock The forward-forward algorithm: Some preliminary investigations.
\newblock \emph{arXiv preprint arXiv:2212.13345}, 2022.

\bibitem[Hiratani et~al.(2022)Hiratani, Mehta, Lillicrap, and Latham]{hiratani2022stability}
N.~Hiratani, Y.~Mehta, T.~Lillicrap, and P.~E. Latham.
\newblock On the stability and scalability of node perturbation learning.
\newblock \emph{Advances in Neural Information Processing Systems}, 35:\penalty0 31929--31941, 2022.

\bibitem[H{\o}ier et~al.(2023)H{\o}ier, Staudt, and Zach]{hoier2023dual}
R.~H{\o}ier, D.~Staudt, and C.~Zach.
\newblock Dual propagation: Accelerating contrastive hebbian learning with dyadic neurons.
\newblock In \emph{International Conference on Machine Learning}, 2023.
\newblock URL \url{https://icml.cc/virtual/2023/poster/23795}.

\bibitem[Jain et~al.(2022)Jain, Tsai, Chen, Muralidhar, Boybat, Frank, Wo{\'z}niak, Stanisavljevic, Adusumilli, Narayanan, et~al.]{jain2022heterogeneous}
S.~Jain, H.~Tsai, C.-T. Chen, R.~Muralidhar, I.~Boybat, M.~M. Frank, S.~Wo{\'z}niak, M.~Stanisavljevic, P.~Adusumilli, P.~Narayanan, et~al.
\newblock A heterogeneous and programmable compute-in-memory accelerator architecture for analog-ai using dense 2-d mesh.
\newblock \emph{IEEE Transactions on Very Large Scale Integration (VLSI) Systems}, 31\penalty0 (1):\penalty0 114--127, 2022.

\bibitem[Kendall et~al.(2020)Kendall, Pantone, Manickavasagam, Bengio, and Scellier]{kendall2020}
J.~Kendall, R.~Pantone, K.~Manickavasagam, Y.~Bengio, and B.~Scellier.
\newblock Training end-to-end analog neural networks with equilibrium propagation.
\newblock \emph{arXiv preprint arXiv:2006.01981}, 2020.

\bibitem[Kingma and Ba(2014)]{kingma2014adam}
D.~P. Kingma and J.~Ba.
\newblock Adam: A method for stochastic optimization.
\newblock \emph{arXiv preprint arXiv:1412.6980}, 2014.

\bibitem[Krizhevsky(2009)]{krizhevsky2009learning}
A.~Krizhevsky.
\newblock Learning multiple layers of features from tiny images.
\newblock Technical report, 2009.

\bibitem[Laborieux and Zenke(2022)]{laborieux2022holomorphic}
A.~Laborieux and F.~Zenke.
\newblock Holomorphic equilibrium propagation computes exact gradients through finite size oscillations.
\newblock \emph{Advances in Neural Information Processing Systems}, 35:\penalty0 12950--12963, 2022.

\bibitem[Laborieux and Zenke(2023)]{laborieux2023improving}
A.~Laborieux and F.~Zenke.
\newblock Improving equilibrium propagation without weight symmetry through jacobian homeostasis.
\newblock \emph{arXiv preprint arXiv:2309.02214}, 2023.

\bibitem[Laborieux et~al.(2021)Laborieux, Ernoult, Scellier, Bengio, Grollier, and Querlioz]{laborieux2021scaling}
A.~Laborieux, M.~Ernoult, B.~Scellier, Y.~Bengio, J.~Grollier, and D.~Querlioz.
\newblock Scaling equilibrium propagation to deep convnets by drastically reducing its gradient estimator bias.
\newblock \emph{Frontiers in neuroscience}, 15:\penalty0 633674, 2021.

\bibitem[Lammie et~al.(2024)Lammie, Ponzina, Wang, Klein, Zapater, Boybat, Sebastian, Ansaloni, and Atienza]{lammie2024lionheart}
C.~Lammie, F.~Ponzina, Y.~Wang, J.~Klein, M.~Zapater, I.~Boybat, A.~Sebastian, G.~Ansaloni, and D.~Atienza.
\newblock Lionheart: A layer-based mapping framework for heterogeneous systems with analog in-memory computing tiles.
\newblock \emph{arXiv preprint arXiv:2401.09420}, 2024.

\bibitem[Li et~al.(2023)Li, Manley, Read, Kaul, Bakir, and Yu]{li2023h3datten}
W.~Li, M.~Manley, J.~Read, A.~Kaul, M.~S. Bakir, and S.~Yu.
\newblock H3datten: Heterogeneous 3-d integrated hybrid analog and digital compute-in-memory accelerator for vision transformer self-attention.
\newblock \emph{IEEE Transactions on Very Large Scale Integration (VLSI) Systems}, 2023.

\bibitem[Liu et~al.(2023)Liu, Mu, Jiang, Wang, Zhang, Lin, Zhou, Liu, and Chen]{liu2023hardsea}
S.~Liu, C.~Mu, H.~Jiang, Y.~Wang, J.~Zhang, F.~Lin, K.~Zhou, Q.~Liu, and C.~Chen.
\newblock Hardsea: Hybrid analog-reram clustering and digital-sram in-memory computing accelerator for dynamic sparse self-attention in transformer.
\newblock \emph{IEEE Transactions on Very Large Scale Integration (VLSI) Systems}, 2023.

\bibitem[Loshchilov and Hutter(2017)]{loshchilov2016sgdr}
I.~Loshchilov and F.~Hutter.
\newblock Sgdr: Stochastic gradient descent with warm restarts.
\newblock In \emph{International Conference on Learning Representations}, 2017.

\bibitem[Malladi et~al.(2023)Malladi, Gao, Nichani, Damian, Lee, Chen, and Arora]{malladi2023fine}
S.~Malladi, T.~Gao, E.~Nichani, A.~Damian, J.~D. Lee, D.~Chen, and S.~Arora.
\newblock Fine-tuning language models with just forward passes.
\newblock \emph{Advances in Neural Information Processing Systems}, 36:\penalty0 53038--53075, 2023.

\bibitem[Nandakumar et~al.(2020)Nandakumar, Le~Gallo, Piveteau, Joshi, Mariani, Boybat, Karunaratne, Khaddam-Aljameh, Egger, Petropoulos, et~al.]{nandakumar2020mixed}
S.~Nandakumar, M.~Le~Gallo, C.~Piveteau, V.~Joshi, G.~Mariani, I.~Boybat, G.~Karunaratne, R.~Khaddam-Aljameh, U.~Egger, A.~Petropoulos, et~al.
\newblock Mixed-precision deep learning based on computational memory.
\newblock \emph{Frontiers in neuroscience}, 14:\penalty0 406, 2020.

\bibitem[Pineda(1987)]{pineda1987generalization}
F.~J. Pineda.
\newblock Generalization of back-propagation to recurrent neural networks.
\newblock \emph{Physical review letters}, 59\penalty0 (19):\penalty0 2229, 1987.

\bibitem[Rasch et~al.(2023)Rasch, Mackin, Le~Gallo, Chen, Fasoli, Odermatt, Li, Nandakumar, Narayanan, Tsai, et~al.]{rasch2023hardware}
M.~J. Rasch, C.~Mackin, M.~Le~Gallo, A.~Chen, A.~Fasoli, F.~Odermatt, N.~Li, S.~Nandakumar, P.~Narayanan, H.~Tsai, et~al.
\newblock Hardware-aware training for large-scale and diverse deep learning inference workloads using in-memory computing-based accelerators.
\newblock \emph{Nature communications}, 14\penalty0 (1):\penalty0 5282, 2023.

\bibitem[Ren et~al.(2022)Ren, Kornblith, Liao, and Hinton]{ren2022scaling}
M.~Ren, S.~Kornblith, R.~Liao, and G.~Hinton.
\newblock Scaling forward gradient with local losses.
\newblock \emph{arXiv preprint arXiv:2210.03310}, 2022.

\bibitem[Russakovsky et~al.(2015)Russakovsky, Deng, Su, Krause, Satheesh, Ma, Huang, Karpathy, Khosla, Bernstein, et~al.]{russakovsky2015imagenet}
O.~Russakovsky, J.~Deng, H.~Su, J.~Krause, S.~Satheesh, S.~Ma, Z.~Huang, A.~Karpathy, A.~Khosla, M.~Bernstein, et~al.
\newblock Imagenet large scale visual recognition challenge.
\newblock \emph{International journal of computer vision}, 115:\penalty0 211--252, 2015.

\bibitem[Scellier(2021)]{scellier2021deep}
B.~Scellier.
\newblock A deep learning theory for neural networks grounded in physics.
\newblock \emph{arXiv preprint arXiv:2103.09985}, 2021.

\bibitem[Scellier(2024)]{scellier2024fast}
B.~Scellier.
\newblock A fast algorithm to simulate nonlinear resistive networks.
\newblock \emph{arXiv preprint arXiv:2402.11674}, 2024.

\bibitem[Scellier and Bengio(2017)]{scellier2017equilibrium}
B.~Scellier and Y.~Bengio.
\newblock Equilibrium propagation: Bridging the gap between energy-based models and backpropagation.
\newblock \emph{Frontiers in computational neuroscience}, 11:\penalty0 24, 2017.

\bibitem[Scellier et~al.(2022)Scellier, Mishra, Bengio, and Ollivier]{scellier2022agnostic}
B.~Scellier, S.~Mishra, Y.~Bengio, and Y.~Ollivier.
\newblock Agnostic physics-driven deep learning.
\newblock \emph{arXiv preprint arXiv:2205.15021}, 2022.

\bibitem[Scellier et~al.(2024)Scellier, Ernoult, Kendall, and Kumar]{scellier2024energy}
B.~Scellier, M.~Ernoult, J.~Kendall, and S.~Kumar.
\newblock Energy-based learning algorithms for analog computing: a comparative study.
\newblock \emph{Advances in Neural Information Processing Systems}, 36, 2024.

\bibitem[Sebastian et~al.(2020)Sebastian, Le~Gallo, Khaddam-Aljameh, and Eleftheriou]{sebastian2020memory}
A.~Sebastian, M.~Le~Gallo, R.~Khaddam-Aljameh, and E.~Eleftheriou.
\newblock Memory devices and applications for in-memory computing.
\newblock \emph{Nature nanotechnology}, 15\penalty0 (7):\penalty0 529--544, 2020.

\bibitem[Spoon et~al.(2021)Spoon, Tsai, Chen, Rasch, Ambrogio, Mackin, Fasoli, Friz, Narayanan, Stanisavljevic, et~al.]{spoon2021toward}
K.~Spoon, H.~Tsai, A.~Chen, M.~J. Rasch, S.~Ambrogio, C.~Mackin, A.~Fasoli, A.~M. Friz, P.~Narayanan, M.~Stanisavljevic, et~al.
\newblock Toward software-equivalent accuracy on transformer-based deep neural networks with analog memory devices.
\newblock \emph{Frontiers in Computational Neuroscience}, 15:\penalty0 675741, 2021.

\bibitem[Stern and Murugan(2023)]{stern2023learning}
M.~Stern and A.~Murugan.
\newblock Learning without neurons in physical systems.
\newblock \emph{Annual Review of Condensed Matter Physics}, 14:\penalty0 417--441, 2023.

\bibitem[Stern et~al.(2023)Stern, Dillavou, Jayaraman, Durian, and Liu]{stern2023physical}
M.~Stern, S.~Dillavou, D.~Jayaraman, D.~J. Durian, and A.~J. Liu.
\newblock Physical learning of power-efficient solutions.
\newblock \emph{arXiv preprint arXiv:2310.10437}, 2023.

\bibitem[Strubell et~al.(2020)Strubell, Ganesh, and McCallum]{strubell2020energy}
E.~Strubell, A.~Ganesh, and A.~McCallum.
\newblock Energy and policy considerations for modern deep learning research.
\newblock In \emph{Proceedings of the AAAI conference on artificial intelligence}, volume~34, pages 13693--13696, 2020.

\bibitem[Thompson et~al.(2020)Thompson, Greenewald, Lee, and Manso]{thompson2020computational}
N.~C. Thompson, K.~Greenewald, K.~Lee, and G.~F. Manso.
\newblock The computational limits of deep learning.
\newblock \emph{arXiv preprint arXiv:2007.05558}, 2020.

\bibitem[Wang et~al.(2023)Wang, Nalla, Krishnan, Joshi, Cady, Fan, Seo, and Cao]{wang2023digital}
Z.~Wang, P.~S. Nalla, G.~Krishnan, R.~V. Joshi, N.~C. Cady, D.~Fan, J.-s. Seo, and Y.~Cao.
\newblock Digital-assisted analog in-memory computing with rram devices.
\newblock In \emph{2023 International VLSI Symposium on Technology, Systems and Applications (VLSI-TSA/VLSI-DAT)}, pages 1--4. IEEE, 2023.

\bibitem[Yi et~al.(2023)Yi, Kendall, Williams, and Kumar]{yi2023activity}
S.-i. Yi, J.~D. Kendall, R.~S. Williams, and S.~Kumar.
\newblock Activity-difference training of deep neural networks using memristor crossbars.
\newblock \emph{Nature Electronics}, 6\penalty0 (1):\penalty0 45--51, 2023.

\bibitem[Zach(2021)]{zach2021bilevel}
C.~Zach.
\newblock Bilevel programs meet deep learning: A unifying view on inference learning methods.
\newblock \emph{arXiv preprint arXiv:2105.07231}, 2021.

\bibitem[Zhang and Brand(2017)]{zhang2017convergent}
Z.~Zhang and M.~Brand.
\newblock Convergent block coordinate descent for training tikhonov regularized deep neural networks.
\newblock \emph{Advances in Neural Information Processing Systems}, 30, 2017.

\bibitem[Zucchet and Sacramento(2022)]{zucchet2022beyond}
N.~Zucchet and J.~Sacramento.
\newblock Beyond backpropagation: implicit gradients for bilevel optimization.
\newblock \emph{arXiv preprint arXiv:2205.03076}, 2022.

\end{thebibliography}

\newpage

\appendix
\begin{appendices}
\section{Appendix}
\DoToC

\subsection{Further insights on ff-EBMs}
\label{app:ff-ebm}

In this section:

\begin{itemize}
    \item We formally define \emph{Feedforward-tied Energy-based Models} (ff-EBMs) with precise assumptions on the energy-based and feedforward blocks (Def.~\ref{def:ff-EBM-formal}).
    \item We show that when energy-based blocks comprise a single layer only, the ff-EBM becomes purely feedforward (Lemma~\ref{lma:ff}).
\end{itemize}

\begin{definition}[ff-EBMs]
\label{def:ff-EBM-formal}
A Feedforward-tied Energy-based Model (ff-EBM) of size $N$ comprises $N$ twice differentiable feedforward mapping $F^1, \cdots, F^N$ and $N-1$ twice differentiable energy functions $E^1, \cdots, E^{N-1}$. For a given $x$, the inference procedure reads as:
    \begin{align}
\left\{
    \begin{array}{l}
    s^0 := x \\
    x^k_\star := F^k(s^{k-1}_\star, \omega^k), \quad s^k_\star := \underset{s}{\arg\min} \ E^k(s, \theta^k, x^k_\star) \quad \forall k = 1 \cdots N-1\\
    \hat{o}_\star := F^N(s^{N-1}_\star, \omega^N)
    \end{array}
\right.
\label{def:ff-EBM-inference-formal}
\end{align}

Finally, we assume that $\forall k = 1 \cdots N-1$, $\nabla_1^2 E^k(s^k_\star, \theta^k, \omega^k)$ is invertible.
\end{definition}

\begin{lemma}
\label{lma:ff}
We consider ff-EBM per Def.~(\ref{def:ff-EBM-formal}) where the energy functions $E^k$ have the form:

\begin{equation}
    E^k(s^k, \theta^k, x^k) := G^k(s^k) - s^{k^\top} \cdot x^k + U^k(s^k, \theta^k).
    \label{ff:energy-block-k-formal}
\end{equation}

We assume that $U^k = 0$ for $k = 1 \cdots N-1$, $s \to \nabla G(s)$ is invertible and we denote $\sigma := \nabla G^{-1}$. Then, the resulting model is a feedforward model described by the following recursive equations:
    \begin{align}
\left\{
    \begin{array}{l}
    s^0_\star = x \\
    x^k_\star = F^k(s^{k-1}_\star, \omega^k), \quad s^k_\star = \sigma(x^k_\star) \quad \forall k = 1 \cdots N-1\\
    \hat{o}_\star := F^N(s^{N-1}_\star, \omega^N)
    \end{array}
\right.
\label{def:deeply-nested-model:feedforward-case}
\end{align}
\end{lemma}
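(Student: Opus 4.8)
The plan is to verify, block by block, that the first-order stationarity condition defining each equilibrium state $s^k_\star$ collapses to an explicit pointwise application of $\sigma$ once $U^k \equiv 0$, and then to read off the feedforward recursion of Eq.~(\ref{def:deeply-nested-model:feedforward-case}) by substitution. The argument is essentially a specialization of the equilibrium characterization already recorded in Eq.~(\ref{eq:steady-state-block-k}).

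First I would fix $k \in \{1, \dots, N-1\}$ and substitute $U^k = 0$ into the energy of Eq.~(\ref{ff:energy-block-k-formal}), giving $E^k(s^k, \theta^k, x^k) = G^k(s^k) - s^{k\top} \cdot x^k$. Since $s^k_\star$ minimizes $s \mapsto E^k(s, \theta^k, x^k_\star)$, it satisfies the first-order condition $\nabla_1 E^k(s^k_\star, \theta^k, x^k_\star) = 0$, i.e. $\nabla G^k(s^k_\star) - x^k_\star = 0$, hence $\nabla G^k(s^k_\star) = x^k_\star$. Invoking the hypothesis that $s \mapsto \nabla G^k(s)$ is invertible with inverse $\sigma$, I apply $\sigma$ to both sides to obtain $s^k_\star = \sigma(x^k_\star)$, which is exactly Eq.~(\ref{eq:steady-state-block-k}) specialized to $U^k = 0$.

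The one point requiring care is that stationarity is only \emph{necessary} for a minimizer, so I should confirm it also characterizes the global minimizer in order for the equilibrium state to be unambiguous. This is where I would use that $\sigma$ is a continuous, invertible activation, so that $\nabla G^k = \sigma^{-1}$ is a strictly monotone gradient field, making $G^k$ strictly convex; the affine term $-s^{k\top} \cdot x^k_\star$ preserves convexity, hence $E^k(\cdot, \theta^k, x^k_\star)$ is strictly convex with a unique stationary point that is its global minimizer. Equivalently, the invertibility of $\nabla_1^2 E^k = \nabla^2 G^k$ assumed in Def.~\ref{def:ff-EBM-formal} guarantees the critical point is isolated and non-degenerate. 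I expect this convexity/uniqueness bookkeeping to be the only mild obstacle, and it is routine given the standing assumptions.

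Finally, I would assemble the full model by substituting $s^k_\star = \sigma(x^k_\star)$ together with $x^k_\star = F^k(s^{k-1}_\star, \omega^k)$ into the inference procedure of Eq.~(\ref{def:ff-EBM-inference-formal}), leaving the boundary cases $s^0_\star = x$ and $\hat{o}_\star = F^N(s^{N-1}_\star, \omega^N)$ untouched. This yields precisely the recursion of Eq.~(\ref{def:deeply-nested-model:feedforward-case}), and a short induction on $k$ confirms that each $s^k_\star$ is obtained from $x$ by alternately applying the feedforward maps $F^k$ and the nonlinearity $\sigma$, so the ff-EBM is a standard feedforward net.
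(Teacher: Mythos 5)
Your proof is correct and follows essentially the same route as the paper's: write the first-order stationarity condition for $s^k_\star$, set $U^k = 0$, and invert $\nabla G^k$ via $\sigma$ to obtain $s^k_\star = \sigma(x^k_\star)$, then substitute into the inference recursion. The extra paragraph on strict convexity and uniqueness of the minimizer is a reasonable tightening that the paper's proof simply omits, but it does not change the argument.
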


\begin{proof}[Proof of Lemma~\ref{lma:ff}]
    Let $k \in [1, N-1]$. By definition of $s^k_\star$ and $x^k_\star$:

    \begin{align}
        &\nabla_1 E^k(s^k_\star, \theta^k, x^k_\star) = 0 \nonumber \\
    \Leftrightarrow \quad & \nabla G^k(s^k_\star) - x^k_\star + \nabla_1 U^k(s^k_\star, \theta^k) = 0 \nonumber \\
    \Leftrightarrow \quad & s^k_\star = \sigma \left(x^k_\star - \nabla_1 U^k(s^k_\star, \theta^k) \right) \label{ff:result}
    \end{align}

    Therefore Eq.~(\ref{def:deeply-nested-model:feedforward-case}) is immediately obtained from Eq.~(\ref{ff:result}) with $U^k=0$.
    
\end{proof}
\newpage
\subsection{Proof of Theorem~\ref{theorem:main-result}}
\label{app:proof-main-thm}

The proof of Theorem~\ref{theorem:main-result} is structured as follows:

\begin{itemize}
    \item We directly solve the multilevel problem optimization defined inside Eq.~(\ref{def:multilevel-optim}) using a Lagrangian-based approach (Lemma~\ref{lma:lagrangian}), yielding optimal Lagrangian multipliers, block states and loss gradients.
    \item We show that by properly nudging the blocks, EP implicitly estimates the previously derived Lagrangian multipliers (Lemma~\ref{lma:lagrangian-ep}).
    \item We demonstrate Theorem~\ref{theorem:main-result} by combining Lemma~\ref{lma:lagrangian} and Lemma~\ref{lma:lagrangian-ep}.
    \item Finally, we highlight that when a ff-EBM is a feedforward net (Lemma~\ref{lma:ff}), then the proposed algorithm reduces to BP (Corollary~\ref{cor:ff-bp}).
\end{itemize}

\begin{lemma}[Lagrangian-based approach]
\label{lma:lagrangian}
Assuming a ff-EBM (Def.~\ref{def:ff-EBM-formal}), we denote $s^1_\star, x^1_\star, \cdots, s^{N-1}_\star, \hat{o}_\star$ the states computed during the forward pass as prescribed by Eq.~(\ref{def:ff-EBM-inference-formal}). Then, the gradients of the objective function $\mathcal{C}:=\ell(\hat{o}(s^{N-1}_\star), y)$ as defined in the multilevel optimization problem (Eq.~(\ref{def:multilevel-optim})), where it is assumed that $\ell$ is differentiable, read:

\begin{align}
\left\{
    \begin{array}{l}
    d_{\omega^{N}} \mathcal{C} = \partial_{2}F^N(s_\star^{N-1}, \omega^N)^\top \cdot \partial_1 \ell(\hat{o}_\star, y), \\
    d_{\theta^k} \mathcal{C} = \nabla^2_{1, 2} \widetilde{E}^k(s^k_\star, \theta^k, s^{k-1}_\star, \omega^k) \cdot \lambda_\star^k \quad \forall k = 1 \cdots N-1, \\
    d_{\omega^k} \mathcal{C} = \nabla^2_{1, 4} \widetilde{E}^k(s^k_\star, \theta^k, s^{k-1}_\star, \omega^k) \cdot \lambda_\star^k \quad \forall k = 1 \cdots N-1,
    \end{array}
\right.
\end{align}
where $\lambda^1_\star, \cdots, \lambda^{N-1}_\star$ satisfy the following conditions:
\begin{align}
\left\{
    \begin{array}{l}
    \nabla_{s^{N-1}} \ell(\hat{o}(s^{N-1}_\star), y) + \nabla^2_1 \widetilde{E}^{N-1}(s^{N-1}_\star, \theta^{N-1}, s^{N-2}_\star, \omega^{N-1}) \cdot \lambda_\star^{N-1} = 0 \\
    \forall k =N-2, \cdots, 1: \\
    \quad \nabla^2_{1, 3}\widetilde{E}^{k+1}\left(s^{k+1}_\star, \theta^{k+1}, s^k_\star, \omega^{k+1} \right) \cdot \lambda_\star^{k+1} + \nabla_1^2 \widetilde{E}^k\left(s^k_\star, \theta^k, s^{k-1}_\star, \omega^k\right)\cdot \lambda_\star^k = 0 
    \end{array}
\right.
\end{align}
\end{lemma}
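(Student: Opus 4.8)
The plan is to treat the multilevel problem Eq.~(\ref{def:multilevel-optim}) as a single equality-constrained program and apply the method of Lagrange multipliers (the adjoint method). First I would replace each inner $\arg\min$ by its first-order stationarity condition $\nabla_1 \widetilde{E}^k(s^k_\star, \theta^k, s^{k-1}_\star, \omega^k) = 0$. This substitution is legitimate because, by Def.~\ref{def:ff-EBM-formal}, $\nabla_1^2 \widetilde{E}^k = \nabla_1^2 E^k$ is invertible, so the implicit function theorem guarantees that each $s^k_\star$ is a differentiable function of all upstream parameters and states. I would then introduce one multiplier $\lambda^k$ per block and form the Lagrangian
\begin{equation}
\mathcal{L} := \ell(\hat{o}(s^{N-1}), y) + \sum_{k=1}^{N-1} \lambda^{k\top} \cdot \nabla_1 \widetilde{E}^k(s^k, \theta^k, s^{k-1}, \omega^k),
\end{equation}
observing that at any feasible point $\mathcal{L} = \mathcal{C}$ irrespective of the values of the $\lambda^k$, so that total derivatives of $\mathcal{L}$ and $\mathcal{C}$ with respect to the parameters coincide.

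The core of the argument is the envelope identity: for any parameter $w \in W$,
\begin{equation}
d_w \mathcal{C} = d_w \mathcal{L} = \partial_w \mathcal{L} + \sum_{k=1}^{N-1} \partial_{s^k}\mathcal{L} \cdot d_w s^k_\star,
\end{equation}
where the multiplier terms drop out because $\partial_{\lambda^k}\mathcal{L} = \nabla_1 \widetilde{E}^k = 0$ at feasibility. The decisive step is to choose the multipliers $\lambda^1_\star, \cdots, \lambda^{N-1}_\star$ so that every costate sensitivity vanishes, i.e. $\partial_{s^k}\mathcal{L} = 0$ for all $k$, which annihilates the intractable Jacobians $d_w s^k_\star$ and leaves simply $d_w \mathcal{C} = \partial_w \mathcal{L}$. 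I would compute $\partial_{s^k}\mathcal{L}$ block by block, carefully tracking the dual role of $s^k$ as both the internal state of block $k$ (first argument of $\widetilde{E}^k$) and the static input feeding block $k+1$ (third argument of $\widetilde{E}^{k+1}$). For $k = N-1$ only the loss term and the $k=N-1$ constraint contribute, yielding the terminal adjoint equation; for $k < N-1$ the cross term $\nabla^2_{1,3}\widetilde{E}^{k+1}\cdot \lambda^{k+1}_\star$ appears alongside $\nabla^2_1 \widetilde{E}^k \cdot \lambda^k_\star$, producing exactly the stated backward recursion, which is well-posed precisely because each $\nabla_1^2 \widetilde{E}^k$ is invertible (solve first for $\lambda^{N-1}_\star$, then back-substitute down to $\lambda^1_\star$).

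Finally I would read off the parameter gradients as the explicit partials $\partial_w \mathcal{L}$. Since $\omega^N$ enters $\mathcal{L}$ only through $\hat{o} = F^N(s^{N-1}_\star, \omega^N)$ in the loss, one chain rule gives $d_{\omega^N}\mathcal{C} = \partial_2 F^N(s^{N-1}_\star,\omega^N)^\top \cdot \partial_1\ell(\hat{o}_\star, y)$; since $\theta^k$ and $\omega^k$ enter only through the single term $\lambda^{k\top}\cdot\nabla_1\widetilde{E}^k$, differentiating under the multiplier yields the mixed Hessians $\nabla^2_{1,2}\widetilde{E}^k \cdot \lambda^k_\star$ and $\nabla^2_{1,4}\widetilde{E}^k \cdot \lambda^k_\star$, matching the claim. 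I expect the main obstacle to be the bookkeeping in the envelope step: rigorously justifying that $\partial_{s^k}\mathcal{L}=0$ is simultaneously achievable and sufficient relies on the implicit differentiability of every $s^k_\star$ (hence the invertibility hypothesis) and on collecting \emph{all} paths through which $s^k$ influences $\mathcal{L}$, since a single indexing slip in the cross term $\nabla^2_{1,3}\widetilde{E}^{k+1}$ would corrupt the whole backward recursion.
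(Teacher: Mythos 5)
Your proposal is correct and follows essentially the same route as the paper: form the Lagrangian $\mathcal{L} = \ell + \sum_k \lambda^{k\top}\cdot\nabla_1\widetilde{E}^k$, impose stationarity in $s$ and $\lambda$ (the paper's KKT conditions are exactly your ``annihilate the costate sensitivities'' step, with the cross term $\nabla^2_{1,3}\widetilde{E}^{k+1}\cdot\lambda^{k+1}_\star$ arising from the dual role of $s^k$ just as you describe), and then read the parameter gradients off as the explicit partials $\partial_W\mathcal{L}$. The only cosmetic difference is that you make the envelope/implicit-function-theorem justification more explicit than the paper does; the substance is identical.
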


\begin{proof}[Proof of Lemma~\ref{lma:lagrangian}]
Denoting $s := (s^1, \cdots, s^{N-1})^\top$ the state variables of the energy-based blocks, $\lambda := (\lambda^1, \cdots, \lambda^{N-1})^\top$ the Lagrangian multipliers associated with each of these variables, $W := \{\theta_1, \omega_1, \cdots, \theta_{N-1}, \omega_{N-1}\}$ the energy-based and feedforward parameters and $\hat{o}(s^{N-1}) := F^N\left(s^{N-1}, \omega^{N-1}\right)$ the logits, the Lagrangian of the multilevel optimization problem as defined in Eq.~(\ref{def:multilevel-optim}) reads:

\begin{equation}
    \mathcal{L}(s, \lambda, W) := \ell\left(\hat{o}(s^{N-1}), y\right) + \sum_{k=1}^{N-1}\lambda^{k^\top} \cdot \nabla_1 \widetilde{E}^k(s^k, \theta^k, s^{k-1}, \omega^k), \quad s^0 := x
\end{equation}

Writing the associated Karush-Kuhn-Tucker (KKT) conditions $\partial_{1, 2}\mathcal{L}(s_\star, \lambda_\star, W) := 0$ satisfied by optimal states and Lagrangian multipliers $s_\star$ and $\lambda_\star$, we get :

\begin{align}
&\nabla_1 \widetilde{E}^k (s^k_\star, \theta^k, s^{k-1}_\star, \omega^k) = 0 \quad \forall k = 1, \cdots, N- 1\label{eq:kkt:1}\\
&\nabla_{s^{N-1}} \ell(\hat{o}(s^{N-1}_\star), y) + \nabla^2_1 \widetilde{E}^{N-1}(s^{N-1}_\star, \theta^{N-1}, s^{N-2}_\star, \omega^{N-1}) \cdot \lambda_\star^{N-1} = 0  \label{eq:kkt:2} \\
&\nabla^2_{1, 3}\widetilde{E}^{k+1}\left(s^{k+1}_\star, \theta^{k+1}, s^k_\star, \omega^{k+1} \right) \cdot \lambda_\star^{k+1} + \nabla_1^2 \widetilde{E}^k\left(s^k_\star, \theta^k, s^{k-1}_\star, \omega^k\right)\cdot \lambda_\star^k = 0 \quad \forall k =N-2, \cdots, 1
 \label{eq:kkt:3} 
\end{align}

Eq.~(\ref{eq:kkt:1}) governs the bottom-up block-wise relaxation procedure (as depicted in Alg.~\ref{alg:inference-ff-ebm}), while Eq.~(\ref{eq:kkt:2}) and  Eq.~(\ref{eq:kkt:3}) governs error propagation in the last block and previous blocks respectively. Given $s_\star$ and $\lambda_\star$ by Eq.~(\ref{eq:kkt:1}) -- Eq.~(\ref{eq:kkt:3}), the \emph{total} derivative of the loss function with respect to the model parameters read:

\begin{align}
    d_W \ell(\hat{o}_\star, y) &= d_W \left(\ell\left(\hat{o}_\star, y\right) + \sum_{k=1}^{N-1}\lambda^{k^\top}_\star \cdot \underbrace{\nabla_1 \widetilde{E}^k(s^k_\star, \theta^k, s^{k-1}_\star, \omega^k)}_{=0 \quad \mbox{(Eq.~(\ref{eq:kkt:1}))}}\right) \nonumber \\
    &= d_W \mathcal{L}(s_\star, \lambda_\star, W) \nonumber \\
    &= d_W s_\star^\top \cdot \underbrace{\partial_1 \mathcal{L}(s_\star, \lambda_\star, W)}_{=0 \quad \mbox{(Eq.~(\ref{eq:kkt:1}}))} + d_W \lambda_\star^\top \cdot \underbrace{\partial_2 \mathcal{L}(s_\star, \lambda_\star, W)}_{=0 \quad \mbox{(Eq.~(\ref{eq:kkt:2})--(\ref{eq:kkt:3}))}} + \partial_3  \mathcal{L}(s_\star, \lambda_\star, W) \nonumber \\
    &= \partial_3  \mathcal{L}(s_\star, \lambda_\star, W) \label{eq:kkt:gradient}
\end{align}

More precisely, applying Eq.~(\ref{eq:kkt:gradient}) to the feedforward and energy-based block parameters yields:

\begin{align*}
    d_{\omega^{N}} \ell(\hat{o}_\star, y) &= \partial_{2}F^N(s_\star^{N-1}, \omega^N)^\top \cdot \nabla_1 \ell(\hat{o}_\star, y), \\
    d_{\theta^k} \ell(\hat{o}_\star, y) &= \nabla^2_{1, 2} \widetilde{E}^k(s^k_\star, \theta^k, s^{k-1}_\star, \omega^k) \cdot \lambda_\star^k \quad \forall k = 1 \cdots N-1 \\
    d_{\omega^k} \ell(\hat{o}_\star, y) &= \nabla^2_{1, 4} \widetilde{E}^k(s^k_\star, \theta^k, s^{k-1}_\star, \omega^k) \cdot \lambda_\star^k \quad \forall k = 1 \cdots N-1
\end{align*}
    
\end{proof}

\begin{lemma}[Computing Lagrangian multipliers by EP]
\label{lma:lagrangian-ep}
Under the same hypothesis as Lemma~\ref{lma:lagrangian}, we define the nudged state of block $k$, denoted as $s^k_\beta$,  implicitly through $\nabla_1 \mathcal{F}^k(s^k_\beta, \theta^k, x^k_\star, \delta s^k, \beta) = 0$ with:
\begin{equation}
\mathcal{F}^k(s^k, \theta^k, x^k_\star, \delta s^k, \beta) := E^k(s^k, \theta^k, x^k_\star) + \beta s^{k^\top}\cdot \delta s^k.
\end{equation}

Defining $(\delta s^k)_{k=1 \cdots N-1}$ recursively as:

\begin{equation}
    \delta s^{N-1} := \nabla_{s^{N-1}} \ell(\hat{o}_\star, y), \quad \delta s^k := d_\beta \left.\left(\nabla_3 \widetilde{E}^{k+1}\left(s^{k+1}_\beta, \theta^{k+1}, s^k_\star, \omega^{k+1} \right)\right)\right|_{\beta=0} \ \forall k=1\cdots N-2,
    \label{lma:delta-s}
\end{equation}

then we have:

\begin{equation}
    \lambda_\star^k = d_\beta \left(s_\beta^k \right)|_{\beta=0} \quad \forall k = 1 \cdots N - 1,
\end{equation}

where $(\lambda_k)_{k=1 \cdots N-1}$ are the Lagrangian multipliers associated to the multilevel optimization problem defined in Eq.~(\ref{def:multilevel-optim}).
\end{lemma}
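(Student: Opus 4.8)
The plan is to show that the family $\mu^k := d_\beta\!\left(s^k_\beta\right)|_{\beta=0}$ satisfies \emph{exactly} the same linear recursion that characterises the Lagrangian multipliers $\lambda^k_\star$ of Lemma~\ref{lma:lagrangian}, namely the KKT conditions Eq.~(\ref{eq:kkt:2})--(\ref{eq:kkt:3}), and then to conclude $\mu^k = \lambda^k_\star$ by a downward induction on $k$ that exploits the invertibility of the block Hessians.

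First I would fix the base objects. At $\beta=0$ the nudging term $\beta\, s^{k\top}\!\cdot\delta s^k$ vanishes, so the defining condition $\nabla_1\mathcal{F}^k(s^k_\beta,\theta^k,x^k_\star,\delta s^k,\beta)=0$ reduces to $\nabla_1 E^k(s^k_0,\theta^k,x^k_\star)=0$, i.e. $s^k_0=s^k_\star$ (the free equilibrium). Since $\nabla_1\mathcal{F}^k = \nabla_1 E^k(s^k,\theta^k,x^k_\star)+\beta\,\delta s^k$ and $\nabla^2_1 E^k(s^k_\star,\theta^k,x^k_\star)$ is invertible by the standing assumption of Def.~\ref{def:ff-EBM-formal}, the implicit function theorem guarantees that $\beta\mapsto s^k_\beta$ is differentiable near $0$. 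Differentiating the stationarity condition in $\beta$ and evaluating at $\beta=0$ yields the key identity $\nabla^2_1 E^k(s^k_\star,\theta^k,x^k_\star)\cdot\mu^k + \delta s^k = 0$; here $\delta s^k$ is a constant vector (it carries no $\beta$-dependence), so its $\beta$-derivative contributes nothing beyond the explicit $\delta s^k$ term.

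Next I would recast this identity in the $\widetilde{E}^k$-formulation used by the KKT system. Because $x^k_\star = F^k(s^{k-1}_\star,\omega^k)$ does not involve $s^k$, the Hessian in the first argument is unchanged under the reparametrisation, i.e. $\nabla^2_1 E^k(s^k_\star,\theta^k,x^k_\star)=\nabla^2_1\widetilde{E}^k(s^k_\star,\theta^k,s^{k-1}_\star,\omega^k)$, so the identity reads $\nabla^2_1\widetilde{E}^k\cdot\mu^k = -\delta s^k$. For $k=N-1$, combining this with $\delta s^{N-1}=\nabla_{s^{N-1}}\ell(\hat{o}_\star,y)$ reproduces the boundary condition Eq.~(\ref{eq:kkt:2}). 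For $k\le N-2$ I would expand the recursive definition of $\delta s^k$: the only $\beta$-dependence inside $\nabla_3\widetilde{E}^{k+1}(s^{k+1}_\beta,\theta^{k+1},s^k_\star,\omega^{k+1})$ enters through the first slot $s^{k+1}_\beta$, so the chain rule gives $\delta s^k = \nabla^2_{1,3}\widetilde{E}^{k+1}(s^{k+1}_\star,\theta^{k+1},s^k_\star,\omega^{k+1})\cdot\mu^{k+1}$. Substituting this back produces $\nabla^2_{1,3}\widetilde{E}^{k+1}\cdot\mu^{k+1}+\nabla^2_1\widetilde{E}^k\cdot\mu^k=0$, which is precisely the interior condition Eq.~(\ref{eq:kkt:3}).

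I would then conclude by downward induction: the $\mu^k$ obey the identical system as the $\lambda^k_\star$, and since each $\nabla^2_1\widetilde{E}^k$ is invertible, the system pins down its solution uniquely at every level, forcing $\mu^k=\lambda^k_\star$ for all $k$. I expect the main obstacle to be careful bookkeeping rather than a conceptual difficulty: one must rigorously justify the differentiability of $\beta\mapsto s^k_\beta$ through the implicit function theorem, verify the Hessian equality $\nabla^2_1 E^k=\nabla^2_1\widetilde{E}^k$, and — most delicately — track the index conventions in the mixed Hessian $\nabla^2_{1,3}\widetilde{E}^{k+1}$ so that the chain-rule expression for $\delta s^k$ aligns exactly, including orientation and any transpose, with the term $\nabla^2_{1,3}\widetilde{E}^{k+1}\cdot\lambda^{k+1}_\star$ appearing in Eq.~(\ref{eq:kkt:3}).
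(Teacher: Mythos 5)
Your proposal is correct and follows essentially the same route as the paper's proof: differentiate the nudged stationarity condition at $\beta=0$, use the chain rule to rewrite $\delta s^k$ as $\nabla^2_{1,3}\widetilde{E}^{k+1}\cdot d_\beta s^{k+1}_\beta|_{\beta=0}$, observe that $d_\beta s^k_\beta|_{\beta=0}$ thereby satisfies the KKT system Eq.~(\ref{eq:kkt:2})--(\ref{eq:kkt:3}), and conclude by backward induction using invertibility of $\nabla_1^2\widetilde{E}^k$. The only cosmetic difference is that the paper interleaves the uniqueness argument with the induction (subtracting the two equations at each step) whereas you first derive the full system for $\mu^k$ and then invoke uniqueness, which is logically equivalent.
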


\begin{proof}[Proof of Lemma~\ref{lma:lagrangian-ep}] We prove this result by backward induction on $k$. \\

{\bfseries Initialization ($k = N-1$)}. By definition, $s^{N-1}_\beta$ satisfies :

\begin{equation}
    \beta \nabla_{s^{N-1}} \ell\left(\hat{o}_\star, y\right) + \nabla_1 \widetilde{E}^{N-1}\left(s^{N-1}_\beta, \theta^{N-1}, s^{N-2}_\star, \omega^{N-1}\right) = 0
    \label{eq:lagrangian-ep:step:1}
\end{equation}

Differentiating Eq.~(\ref{eq:lagrangian-ep:step:1}) with respect to $\beta$ and evaluating the resulting expression at $\beta = 0$, we obtain:

\begin{equation}
    \nabla_{s^{N-1}} \ell\left(\hat{o}_\star, y\right) + \nabla_1^2 \widetilde{E}^{N-1}\left(s^{N-1}_\star, \theta^{N-1}, s^{N-2}_\star, \omega^{N-1}\right)\cdot d_\beta s^{N-1}_\beta|_{\beta=0} = 0
    \label{eq:lagrangian-ep:step:2}
\end{equation}

Substracting out Eq.~(\ref{eq:kkt:2}) defining the Lagrangian multiplier $\lambda^{N-1}_\star$ and Eq.~(\ref{eq:lagrangian-ep:step:2}), we obtain:

\begin{equation}
    \nabla_1^2 \widetilde{E}^{N-1}\left(s^{N-1}_\star, \theta^{N-1}, s^{N-2}_\star, \omega^{N-1}\right)\cdot \left( d_\beta s^{N-1}_\beta|_{\beta=0} - \lambda_\star^{N-1}\right) = 0
\end{equation}

By invertibility of $\nabla_1^2 \widetilde{E}^{N-1}\left(s^{N-1}_\star, \theta^{N-1}, s^{N-2}_\star, \omega^{N-1}\right)$, we therefore have that:

\begin{equation}
    \lambda_\star^{N-1}=d_\beta s^{N-1}_\beta|_{\beta=0}
\end{equation}

{\bfseries Backward induction step ($k + 1 \to k$).} Let us assume that $\lambda^{k+1}_\star = d_\beta s^{k+1}_\beta |_{\beta=0}$. We want to prove that $\lambda^{k}_\star = d_\beta s^{k}_\beta |_{\beta=0}$. Again, $s^{k+1}_\beta$ satisfies by definition:

\begin{equation}
    \beta \delta s^k + \nabla_1 \widetilde{E}^k\left(s^k_\beta, \theta^k, s^{k-1}_\star, \omega^k\right) = 0, \quad \delta s^k := d_\beta \left.\left(\nabla_3 \widetilde{E}^{k+1}\left(s^{k+1}_\beta, \theta^{k+1}, s^k_\star, \omega^{k+1} \right)\right)\right|_{\beta=0}.
    \label{eq:lagrangian-ep:step:3}
\end{equation}

On the one hand, proceeding as for the initialization step, differentiating Eq.~(\ref{eq:lagrangian-ep:step:3}) with respect to $\beta$ and taking $\beta=0$ yields:

\begin{equation}
    \delta s^k + \nabla_1^2 \widetilde{E}^k(s^k_\star, \theta^k, s^{k-1}_\star, \omega^k)\cdot d_\beta s^k_\beta |_{\beta=0} = 0.
    \label{eq:lagrangian-ep:step:4}
\end{equation}

On the other hand, note that $\delta s^k$ rewrites :

\begin{align}
    \delta s^k &= d_\beta \left.\left(\nabla_3 \widetilde{E}^{k+1}\left(s^{k+1}_\beta, \theta^{k+1}, s^k_\star, \omega^{k+1} \right)\right)\right|_{\beta=0} \nonumber\\
    &=\nabla_{1, 3}^2 \widetilde{E}^{k+1}\left(s^{k+1}_\star, \theta^{k+1}, s^k_\star, \omega^{k+1}\right) \cdot \left.d s^{k+1}_\beta \right|_{\beta=0} \nonumber \\
    &= \nabla_{1, 3}^2 \widetilde{E}^{k+1}\left(s^{k+1}_\star, \theta^{k+1}, s^k_\star, \omega^{k+1}\right) \cdot \lambda^{k+1}_\star,
    \label{eq:lagrangian-ep:step:5}
\end{align}

where we used at the last step the recursion hypothesis. Therefore combining Eq.~(\ref{eq:lagrangian-ep:step:4}) and Eq.~(\ref{eq:lagrangian-ep:step:5}), we get:

\begin{equation}
    \nabla_{1, 3}^2 \widetilde{E}^{k+1}\left(s^{k+1}_\star, \theta^{k+1}, s^k_\star, \omega^{k+1}\right) \cdot \lambda^{k+1}_\star + \nabla_1^2 \widetilde{E}^k(s^k_\star, \theta^k, s^{k-1}_\star, \omega^k)\cdot d_\beta s^k_\beta |_{\beta=0} = 0.
    \label{eq:lagrangian-ep:step:6}
\end{equation}

Finally, we substract out Eq.~(\ref{eq:kkt:3}) and Eq.~(\ref{eq:lagrangian-ep:step:6}) to obtain:

\begin{equation}
\nabla_1^2 \widetilde{E}^k(s^k_\star, \theta^k, s^{k-1}_\star, \omega^k)\cdot \left( d_\beta s^k_\beta |_{\beta=0}  - \lambda_\star^k\right)= 0.
\label{eq:lagrangian-ep:step:7}
\end{equation}

We conclude again by invertibility of $\nabla_1^2 \widetilde{E}^k(s^k_\star, \theta^k, s^{k-1}_\star, \omega^k)$ that $\lambda^k_\star = d_\beta s^k_\beta |_{\beta=0}$.

\end{proof}

\begin{theorem}[Formal]
\label{theorem:main-result-formal}
Assuming a model of the form Eq.~(\ref{def:deeply-nested-model}), we denote $s^1_\star, x^1_\star, \cdots, s^{N-1}_\star, \hat{o}_\star$ the states computed during the forward pass as prescribed by Alg.~\ref{alg:inference-ff-ebm}. 
We define the nudged state of block $k$, denoted as $s^k_\beta$,  implicitly through $\nabla_1 \mathcal{F}^k(s^k_\beta, \theta^k, x^k_\star, \delta s^k, \beta) = 0$ with:
\begin{equation}
\mathcal{F}^k(s^k, \theta^k, x^k_\star, \delta s^k, \beta) := E^k(s^k, \theta^k, x^k_\star) + \beta s^{k^\top}\cdot \delta s^k.
\end{equation}

Denoting $\delta s^k$ and $\Delta x^k$ the error signals computed at the input of the feedforward block $F^k$ and of the energy-based block $E^k$ respectively, $g_{\theta^k}$ and $g_{\omega^k}$ the gradients of the loss function:

\begin{equation}
    \forall k = 1, \cdots, N-1: \ g_{\theta^k} := d_{\theta^k} \mathcal{C}, \qquad \forall k =1 \cdots N: \ g_{\omega^k} := d_{\omega^k} \mathcal{C},
\end{equation}

then the following chain rule applies:

\begin{align}
&\delta s^{N-1} := \nabla_{s^{N-1}}\ell(\hat{o}_\star, y), \quad g_{\omega^N} = \partial_2 F^{N}\left(s^{N-1}_\star, \omega^N \right)^\top \cdot \nabla_{1} \ell(\hat{o}_\star, y) \label{lma:main-result-formal-initial-error-signal}\\
&\forall k=1 \cdots N-1: \nonumber\\
&\left\{
    \begin{array}{l}
\Delta x^k = \left.d_{\beta}\left(\nabla_{3}E^k (s^k_\beta, \theta^k, x^k_\star)\right)\right|_{\beta=0}, \quad g_{\theta^k} = \left.d_{\beta}\left(\nabla_{2}E^k (s^k_\beta, \theta^k, x^k_\star)\right)\right|_{\beta=0} \\
    \delta s^{k-1} = \partial_1 F^k\left(s^{k-1}_\star, \omega^k \right)^\top \cdot \Delta x^k, \quad g_{\omega^k} = \partial_2 F^k\left(s^{k-1}_\star, \omega^k\right)^\top \cdot \Delta x^k
    \end{array}
\right.
\end{align}
\end{theorem}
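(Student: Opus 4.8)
The plan is to obtain the statement as a direct synthesis of Lemma~\ref{lma:lagrangian} and Lemma~\ref{lma:lagrangian-ep}, the only genuine work being to \emph{unfold} the tied energy $\widetilde{E}^k = E^k \circ F^k$ into its energy-based and feedforward contributions via the chain rule. First I would invoke Lemma~\ref{lma:lagrangian} to write every loss gradient in closed form: $g_{\omega^N}$ is already exactly the claimed expression, while $g_{\theta^k} = \nabla^2_{1,2}\widetilde{E}^k(s^k_\star,\theta^k,s^{k-1}_\star,\omega^k)\cdot\lambda^k_\star$ and $g_{\omega^k} = \nabla^2_{1,4}\widetilde{E}^k(s^k_\star,\theta^k,s^{k-1}_\star,\omega^k)\cdot\lambda^k_\star$, with the multipliers $\lambda^k_\star$ pinned down by the adjoint KKT recursion. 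Then Lemma~\ref{lma:lagrangian-ep} lets me substitute $\lambda^k_\star = d_\beta s^k_\beta|_{\beta=0}$, converting these static Hessian-times-multiplier expressions into quantities defined purely through the nudged states $s^k_\beta$, whose nudging is governed by $\mathcal{F}^k$ (equivalently by $\widetilde{E}^k$, since $F^k$ does not involve $s^k$ so that $\nabla_1\widetilde{E}^k = \nabla_1 E^k$ and the two nudged states coincide).

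The crucial identity to establish is that, because $s^k_\beta$ is the only $\beta$-dependent object inside $\nabla_2 E^k(s^k_\beta,\theta^k,x^k_\star)$ and $\nabla_3 E^k(s^k_\beta,\theta^k,x^k_\star)$ (the static input $x^k_\star$ being frozen at its forward value), a single chain rule gives $d_\beta\!\left(\nabla_2 E^k(s^k_\beta,\theta^k,x^k_\star)\right)|_{\beta=0} = \nabla^2_{1,2}E^k(s^k_\star,\theta^k,x^k_\star)\cdot\lambda^k_\star$ and likewise for $\nabla_3$. Since $F^k$ is independent of $s^k$, the mixed Hessians of $\widetilde{E}^k$ factor cleanly as $\nabla^2_{1,2}\widetilde{E}^k = \nabla^2_{1,2}E^k$, $\nabla^2_{1,4}\widetilde{E}^k = (\partial_2 F^k)^\top\cdot\nabla^2_{1,3}E^k$, and $\nabla^2_{1,3}\widetilde{E}^k = (\partial_1 F^k)^\top\cdot\nabla^2_{1,3}E^k$. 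Defining the intermediate signal $\Delta x^k := d_\beta\!\left(\nabla_3 E^k(s^k_\beta,\theta^k,x^k_\star)\right)|_{\beta=0} = \nabla^2_{1,3}E^k(s^k_\star,\theta^k,x^k_\star)\cdot\lambda^k_\star$, these factorizations immediately yield $g_{\theta^k} = d_\beta(\nabla_2 E^k)|_{\beta=0}$, $g_{\omega^k} = (\partial_2 F^k)^\top\cdot\Delta x^k$, and the backward recursion $\delta s^{k-1} = (\partial_1 F^k)^\top\cdot\Delta x^k$, where the last equality follows by matching $(\partial_1 F^k)^\top\cdot\Delta x^k$ against the definition $\delta s^{k-1} = d_\beta\!\left(\nabla_3\widetilde{E}^k(s^k_\beta,\theta^k,s^{k-1}_\star,\omega^k)\right)|_{\beta=0}$ supplied by Lemma~\ref{lma:lagrangian-ep}.

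Finally I would verify the base case $\delta s^{N-1} = \nabla_{s^{N-1}}\ell(\hat{o}_\star,y)$ against the initialization of the Lemma~\ref{lma:lagrangian-ep} recursion, and confirm that the freshly introduced $\Delta x^k$ is precisely the eq-propagated error at the input of $E^k$ which, once pushed backward through $(\partial_1 F^k)^\top$, regenerates the $\delta s^{k-1}$ recursion and thereby closes the chain. The main obstacle I anticipate is bookkeeping rather than conceptual: keeping the argument-slot indexing of $\widetilde{E}^k$ versus $E^k$ consistent while differentiating, and carefully justifying that the total $\beta$-derivative of a gradient collapses to the single mixed-Hessian term $\nabla^2_{1,\bullet}E^k\cdot d_\beta s^k_\beta$. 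This collapse is exactly what licenses replacing the implicit $\beta$-nudging expressions by the explicit adjoint quantities, and it is the one place where the invertibility of $\nabla^2_1 E^k$ — already exploited inside Lemma~\ref{lma:lagrangian-ep} — is implicitly relied upon.
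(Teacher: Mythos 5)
Your proposal is correct and follows essentially the same route as the paper: combine Lemma~\ref{lma:lagrangian} with Lemma~\ref{lma:lagrangian-ep} to replace the Lagrange multipliers by $d_\beta s^k_\beta|_{\beta=0}$, then unfold $\widetilde{E}^k = E^k(\cdot,\cdot,F^k(\cdot,\cdot))$ via the chain rule to split each mixed-Hessian expression into the $E^k$-side quantity $\Delta x^k$ and the Jacobians $\partial_1 F^k$, $\partial_2 F^k$ (the paper carries the $d_\beta(\cdot)|_{\beta=0}$ operator through directly rather than writing the intermediate Hessian factorizations, but the computation is identical). Your closing remark attributes the collapse of the total $\beta$-derivative to invertibility of $\nabla_1^2 E^k$; strictly that collapse is just the chain rule once $s^k_\beta$ is known to be differentiable in $\beta$, the invertibility being what guarantees (via the implicit function theorem and inside Lemma~\ref{lma:lagrangian-ep}) that $s^k_\beta$ is well defined and that $d_\beta s^k_\beta|_{\beta=0}$ equals $\lambda^k_\star$ — a minor imprecision, not a gap.
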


\begin{proof}[Proof of Theorem~\ref{theorem:main-result-formal}]

Combining Lemma~\ref{lma:lagrangian} and Lemma~\ref{lma:lagrangian-ep}, the following chain rule computes loss gradients correctly:

\begin{align}
&\delta s^{N-1} := \nabla_{s^{N-1}}\ell(\hat{o}_\star, y), \quad g_{\omega^N} = \partial_2 F^{N}\left(s^{N-1}_\star, \omega^N \right)^\top \cdot \nabla_{1} \ell(\hat{o}_\star, y)\\
&\forall k=1 \cdots N-1: \nonumber\\
&\left\{
    \begin{array}{l}
    \Delta s^{k-1} = d_\beta \left.\left(\nabla_3 \widetilde{E}^{k}\left(s^{k}_\beta, \theta^{k}, s^{k-1}_\star, \omega^{k} \right)\right)\right|_{\beta=0}, \quad g_{\theta^k} = \nabla^2_{1, 2} \widetilde{E}^k(s^k_\star, \theta^k, s^{k-1}_\star, \omega^k) \cdot d_\beta s^k_\beta |_{\beta=0}\\
    g_{\omega^k} = \nabla_{1, 4}^2 \widetilde{E}^k(s^k_\star, \theta^k, s^{k-1_\star, \omega^k})\cdot d_\beta s^k_\beta |_{\beta=0}
    \end{array}
\right.
\end{align}

Therefore to conclude the proof, we need to show that $\forall k = 1, \cdots, N-1$:

\begin{align}
d_\beta \left.\left(\nabla_3 \widetilde{E}^{k}\left(s^{k}_\beta, \theta^{k}, s^{k-1}_\star, \omega^{k} \right)\right)\right|_{\beta=0} &=  \partial_1 F^k\left(s^{k-1}_\star, \omega^k \right)^\top \cdot \left.d_{\beta}\left(\nabla_{3}E^k (s^k_\beta, \theta^k, x^k_\star)\right)\right|_{\beta=0}  \label{eq:main-result-formal:1} \\
\nabla^2_{1, 2}\widetilde{E}^k(s^k_\star, \theta^k, s^{k-1}_\star, \omega^k) \cdot d_\beta s^k_\beta |_{\beta=0} &=\left.d_{\beta}\left(\nabla_{2}E^k (s^k_\beta, \theta^k, x^k_\star)\right)\right|_{\beta=0} \label{eq:main-result-formal:2} \\
\nabla_{1, 4}^2 \widetilde{E}^k(s^k_\star, \theta^k, s^{k-1}_\star, \omega^k)\cdot d_\beta s^k_\beta |_{\beta=0} &= \partial_2 F^k\left(s^{k-1}_\star, \omega^k\right)^\top \cdot \left.d_{\beta}\left(\nabla_{3}E^k (s^k_\beta, \theta^k, x^k_\star)\right)\right|_{\beta=0}\label{eq:main-result-formal:3}
\end{align}

Let $k \in [1, N-1]$. We prove Eq.~(\ref{eq:main-result-formal:1}) as:

\begin{align*}
    d_\beta \left.\left(\nabla_3 \widetilde{E}^{k}\left(s^{k}_\beta, \theta^{k}, s^{k-1}_\star, \omega^{k} \right)\right)\right|_{\beta=0} 
    &=d_\beta \left.\left(\nabla_{s^{k-1}} E^{k}\left(s^{k}_\beta, \theta^{k}, F^{k}\left(s^{k-1}_\star, \omega^{k} \right)\right)\right)\right|_{\beta=0} \nonumber\\
    &= \partial_1 F^{k}\left(s^{k-1}_\star, \omega^{k} \right)^\top \cdot \left.d_{\beta}\left(\nabla_{3}E^{k} (s^{k}_\beta, \theta^{k}, x^{k}_\star)\right)\right|_{\beta=0}
\end{align*}

Eq.~(\ref{eq:main-result-formal:2}) can be obtained as:

\begin{align*}
    \nabla^2_{1, 2} \widetilde{E}^k(s^k_\star, \theta^k, s^{k-1}_\star, \omega^k) \cdot d_\beta s^k_\beta |_{\beta=0}
    &= \left.d_\beta \left(\nabla_2 \widetilde{E}^k(s^k_\beta, \theta^k, s^{k-1}_\star, \omega^k) \right)\right|_{\beta=0} \nonumber \\
    &= \left.d_{\beta}\left(\nabla_{2}E^k (s^k_\beta, \theta^k, x^k_\star)\right)\right|_{\beta=0}
\end{align*}

Finally and similarly, we have:

\begin{align*}
    \nabla_{1, 4}^2 \widetilde{E}^k(s^k_\star, \theta^k, s^{k-1}_\star, \omega^k)\cdot d_\beta s^k_\beta |_{\beta=0} &= \left. d_\beta \left( \nabla_{4} \widetilde{E}^k(s^k_\beta, \theta^k, s^{k-1}_\star, \omega^k)\right) \right|_{\beta=0} \\
    &=  \left. d_\beta \left( \nabla_{\omega^k} E^k(s^k_\beta, \theta^k, F^k\left(s^{k-1}_\star, \omega^k\right))\right) \right|_{\beta=0} \\
    &= \left. d_\beta \left(\partial_2 F\left(s^{k-1}_\star, \omega^k \right)^\top \cdot \nabla_{3} E^k(s^k_\beta, \theta^k, x^k_\star)\right) \right|_{\beta=0} \\
    &= \partial_2 F\left(s^{k-1}_\star, \omega^k \right)^\top \cdot \left. d_\beta \left(\nabla_{3} E^k(s^k_\beta, \theta^k, x^k_\star)\right) \right|_{\beta=0} 
\end{align*}
\end{proof}

\begin{corollary}
\label{cor:ff-bp}
Under the same hypothesis as Theorem~\ref{theorem:main-result-formal} and Lemma~\ref{lma:ff}, then the following chain rule applies to compute error signals backward from the output layer:
\begin{align}
\left\{
    \begin{array}{l}
    \delta s^{N-1} := \nabla_{s^{N-1}}\ell(\hat{o}_\star, y), \quad g_{\omega^N} = \nabla_{\omega^N} \ell(\hat{o}_\star, y) \\
    \Delta x^k = \sigma'(x^k_\star) \odot \delta s^k\\
    \delta s^{k-1} = \partial_1 F^k\left(s^{k-1}_\star, \omega^k \right)^\top \cdot \Delta x^k, \quad g_{\omega^k} = \partial_2 F^k\left(s^{k-1}_\star, \omega^k\right)^\top \cdot \Delta x^k
    \end{array}
\right.
\label{eq:ff-bp:chain-rule}
\end{align}
\end{corollary}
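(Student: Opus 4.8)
The plan is to specialize the general chain rule of Theorem~\ref{theorem:main-result-formal} to the feedforward case $U^k = 0$ and to show that the energy-based error transfer $\delta s^k \to \Delta x^k$ collapses into the elementwise multiplication by $\sigma'$ characteristic of backprop. The backprop-through-$F^k$ relations for $\delta s^{k-1}$ and $g_{\omega^k}$ are already in the exact form claimed by Eq.~(\ref{eq:ff-bp:chain-rule}), so they carry over verbatim; the only work is to re-express $\Delta x^k$, to dispose of the now-vacuous energy-based parameter gradient $g_{\theta^k}$, and to rewrite the output term $g_{\omega^N}$.

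First I would compute $\nabla_3 E^k$ for the energy form of Eq.~(\ref{ff:energy-block-k-formal}). Since neither $G^k$ nor $U^k$ depends on the static input $x^k$, we have $\nabla_3 E^k(s^k, \theta^k, x^k) = -s^k$, and hence $\Delta x^k = d_\beta(\nabla_3 E^k(s^k_\beta, \theta^k, x^k_\star))|_{\beta=0} = -\,d_\beta s^k_\beta|_{\beta=0}$. This reduces the problem to differentiating the nudged state with respect to the nudging factor at $\beta=0$.

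Next I would obtain $s^k_\beta$ in closed form. With $U^k = 0$ the stationarity condition $\nabla_1 \mathcal{F}^k(s^k_\beta, \theta^k, x^k_\star, \delta s^k, \beta) = 0$ reads $\nabla G^k(s^k_\beta) - x^k_\star + \beta\,\delta s^k = 0$, so that, using $\sigma = (\nabla G^k)^{-1}$, the nudged state is simply $s^k_\beta = \sigma(x^k_\star - \beta\,\delta s^k)$ — exactly the free equilibrium $s^k_\star = \sigma(x^k_\star)$ of Lemma~\ref{lma:ff} shifted by $-\beta\,\delta s^k$ inside $\sigma$. Differentiating through $\sigma$ and evaluating at $\beta=0$ gives $d_\beta s^k_\beta|_{\beta=0} = -\sigma'(x^k_\star) \odot \delta s^k$, where $\sigma'$ enters as a diagonal Jacobian because $\sigma$ acts elementwise, turning the Jacobian-vector product into the Hadamard product $\odot$. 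Combining with the previous step yields $\Delta x^k = \sigma'(x^k_\star)\odot\delta s^k$, the middle line of Eq.~(\ref{eq:ff-bp:chain-rule}).

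Finally I would tidy up the endpoints. Because $U^k=0$, the energy $E^k$ is independent of $\theta^k$, so $\nabla_2 E^k \equiv 0$ and the energy-based parameter gradient $g_{\theta^k} = d_\beta(\nabla_2 E^k(s^k_\beta, \theta^k, x^k_\star))|_{\beta=0}$ vanishes, consistent with there being no weights to train inside a single-layer block. For the output, I would rewrite the explicit expression $g_{\omega^N} = \partial_2 F^N(s^{N-1}_\star, \omega^N)^\top \cdot \nabla_1 \ell(\hat{o}_\star, y)$ as the plain gradient $\nabla_{\omega^N}\ell(\hat{o}_\star, y)$ by the chain rule applied to $\hat{o}_\star = F^N(s^{N-1}_\star, \omega^N)$. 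I do not anticipate a genuine obstacle; the only points requiring care are the bookkeeping of the sign and the identification of $\sigma'$ as a diagonal, Hadamard-inducing Jacobian, since $\sigma$ being invertible and elementwise is precisely what makes the energy-based backward step degenerate into the familiar backprop multiplier.
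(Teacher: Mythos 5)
Your proposal is correct and follows essentially the same route as the paper's proof: both reduce the claim to showing $\Delta x^k = \sigma'(x^k_\star)\odot\delta s^k$ via $\nabla_3 E^k = -s^k$ and the closed form $s^k_\beta = \sigma(x^k_\star - \beta\,\delta s^k)$ obtained from the nudged stationarity condition with $U^k=0$. Your additional remarks on the vanishing of $g_{\theta^k}$ and the rewriting of $g_{\omega^N}$ are correct tidying-up that the paper leaves implicit.
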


\begin{proof}[Proof of Corollary~\ref{cor:ff-bp}]
Let $k \in [1, N-1]$. As we can directly apply Theorem~\ref{theorem:main-result-formal} here, proving the result simply boils down to showing that:

\begin{equation}
    \Delta x^k = \sigma'(x^k_\star) \odot \delta s^k
    \label{eq:ff-bp:0}
\end{equation}

First, we notice that when $E^k$ is of the form of Eq.~(\ref{ff:energy-block-k-formal}), then $\Delta x^k$ reads as:

\begin{equation}
    \Delta x^k = \left.d_{\beta}\left(\nabla_{3}E^k (s^k_\beta, \theta^k, x^k_\star)\right)\right|_{\beta=0} = -\left. d_\beta \left(s^k_\beta\right) \right|_{\beta=0}.
    \label{eq:ff-bp:1}
\end{equation}

$s^k_\beta$ satisfies, by definition and when $U^k = 0$:

\begin{align}
    &\sigma^{-1}(s^k_\beta) - x^k_\star + \beta \delta s^k = 0 \nonumber \\
    \Leftrightarrow \quad & s^k_\beta = \sigma\left( x^k_\star - \beta \delta s^k\right)\label{eq:ff-bp:2}
\end{align}

Combining Eq.~(\ref{eq:ff-bp:1}) and Eq.~(\ref{eq:ff-bp:2})
yields Eq.~(\ref{eq:ff-bp:0}), and therefore, along with Theorem~\ref{theorem:main-result-formal}, the chain-rule Eq.~(\ref{eq:ff-bp:chain-rule}).
\end{proof}

We showcase in Alg.~\ref{alg:implicit-bp-ep-chain-ff} the resulting algorithm with \emph{finite} $\beta$ and \emph{implicit} BP-EP chaining, with lines in blue highlighting differences with the general algorithm Alg.~\ref{alg:implicit-bp-ep-chain}. 

\begin{algorithm}[H]
    \caption{Implicit BP-EP gradient chaining with $U^k=0$}
    \label{alg:implicit-bp-ep-chain-ff}
    \begin{algorithmic}[1]
    \State $\delta s, g_{\omega^N} \gets \nabla_{s^{N-1}}\ell(\hat{o}_\star, y), \nabla_{\omega^N} \ell(\hat{o}_\star, y) $ \Comment{Single backprop step}
    \For{$k=N - 1 \cdots 1$}
        \State \textcolor{blue}{$s_\beta, \ s_{-\beta} \gets \sigma \left(x^k_\star - \beta \delta s^k\right), \sigma \left(x^k_\star + \beta \delta s^k\right)$} \Comment{EP through $\widetilde{E}^k$}
        \State $g_{\omega^k} \gets \frac{1}{2\beta}\left(\nabla_{4} \widetilde{E}^k(s_\beta, \theta^k, s^{k-1}_\star, \omega^k) - \nabla_{4} \widetilde{E}^k(s_{-\beta}, \theta^k, s^{k-1}_\star, \omega^k)\right)$ \Comment{i-BP through $F^k$}
    \State $ \delta s \gets \frac{1}{2\beta}\left(\nabla_{3} \widetilde{E}^k(s_\beta, \theta^k, s^{k-1}_\star, \omega^k) - \nabla_{3} \widetilde{E}^k(s_{-\beta}, \theta^k, s^{k-1}_\star, \omega^k)\right)$
    \EndFor
    \end{algorithmic}
\end{algorithm}

\subsection{Explicit BP-EP chaining}
\label{app:explicit-bp-ep-chaining}

We presented in Alg.~\ref{alg:implicit-bp-ep-chain} a ``pure'' EP algorithm where the BP-EP gradient chaining is \emph{implicit}. We show below, inside Alg.~\ref{alg:explicit-bp-ep-chain}, an alternative implementation (equivalent in the limit $\beta \to 0$) where the use of BP through feedforward modules is \emph{explicit} and which is the direct implementation of Theorem~\ref{theorem:main-result-formal}. We also show the resulting algorithm when the ff-EBM reduces to a feedforward net (Lemma~\ref{lma:ff}) inside Alg.~\ref{alg:explicit-bp-ep-chain-ff}, highlight in blue the statements which differ from the general case presented inside Alg.~\ref{alg:explicit-bp-ep-chain}.

\begin{algorithm}[H]
    \caption{Explicit BP-EP gradient chaining (Theorem~(\ref{theorem:main-result}))}
    \label{alg:explicit-bp-ep-chain}
    \begin{algorithmic}[1]
    \State $\delta s, g_{\omega^N} \gets \nabla_{s^{N-1}}\ell(\hat{o}_\star, y), \nabla_{\omega^N} \ell(\hat{o}_\star, y) $ \Comment{Single backprop step}
    \For{$k=N - 1 \cdots 1$}
        \State $s_\beta \gets \underset{s}{\mathbf{Optim}}\left[E^k(s, \theta^k, x^k_\star) + \beta s^\top \cdot \delta s\right]$ \Comment{EP through $E^k$}
        \State $s_{-\beta} \gets \underset{s}{\mathbf{Optim}}\left[E^k(s, \theta^k, x^k_\star) - \beta s^\top \cdot \delta s\right]$
        \State $ g_{\theta^k} \gets \frac{1}{2\beta}\left(\nabla_{2} E^k(s_\beta, \theta^k, x^k_\star) - \nabla_{2} E^k(s_{-\beta}, \theta^k, x^k_\star)\right)$
        \State $\Delta x \gets \frac{1}{2\beta}\left(\nabla_{3} E^k(s_\beta, \theta^k, x^k_\star) - \nabla_{3} E^k(s_{-\beta}, \theta^k, x^k_\star)\right)$
        \State $g_{\omega^k} \gets \partial_2 F^k\left(s^{k-1}_\star, \omega^k\right)^\top \cdot \Delta x$ \Comment{Explicit BP through $F^k$}
        \State $\delta s \gets  \partial_1 F^k\left(s^{k-1}_\star, \omega^k \right)^\top \cdot \Delta x$
    \EndFor
    \end{algorithmic}
\end{algorithm}

\begin{algorithm}[H]
    \caption{Explicit BP-EP gradient chaining with $U^k=0$}
    \label{alg:explicit-bp-ep-chain-ff}
    \begin{algorithmic}[1]
    \State $\delta s, g_{\omega^N} \gets \nabla_{s^{N-1}}\ell(\hat{o}_\star, y), \nabla_{\omega^N} \ell(\hat{o}_\star, y) $ \Comment{Single backprop step}
    \For{$k=N - 1 \cdots 1$} 
        \State $\textcolor{blue}{\Delta x \gets -\frac{1}{2\beta}\left(\sigma \left(x^k_\star - \beta \delta s^k\right) - \sigma \left(x^k_\star + \beta \delta s^k\right)\right)}$
        \State $g_{\omega^k} \gets \partial_2 F^k\left(s^{k-1}_\star, \omega^k\right)^\top \cdot \Delta x$ \Comment{Explicit BP through $F^k$}
        \State $\delta s \gets  \partial_1 F^k\left(s^{k-1}_\star, \omega^k \right)^\top \cdot \Delta x$
    \EndFor
    \end{algorithmic}
\end{algorithm}

\newpage

\subsection{Model and algorithm details}
\label{subsec:details-model}

\paragraph{Equilibrium computation.} As mentioned in Section~\ref{subsec:ff-ebm}, the energy function of the k$^{\rm th}$  EB block has the form:

\begin{equation}
    E^k(s^k, \theta^k, x^k) := G^k(s^k) - s^{k^\top} \cdot x^k + U^k(s^k, \theta^k)
    \label{app:def:energy-block-k-version-1},
\end{equation}

where $x^k$ is the output of the preceding feedforward block. For a given choice of a continuously invertible activation function, $G^k_\sigma$ is defined as:

\begin{equation}
G^k_\sigma(s^k) :=\sum_{i=1}^{\dim(s^k)}\int^{s_i}\sigma_i^{-1}(u_i)du_i \quad \mbox{such that} \quad \nabla G^k_\sigma(s^k)_{i} = \sigma^{-1}_i(s^k_i) \quad \forall i = 1 \cdots \dim(s^k).
\end{equation}

To be more explicit and as we did previously, we re-write the augmented energy-function which encompasses both the k$^{\rm th}$ EB block and the feedforward module that precedes it:

\begin{equation}
    \widetilde{E}^k(s^k, \theta^k, s^{k-1}_\star, \omega^k) := E^k\left(s^k, \theta^k, F^k\left(s^{k-1}_\star, \omega^{k} \right)\right)
    \label{app:def:energy-block-k-version-2}.
\end{equation}

We showed that when is chosen such that $\nabla G = \sigma^{-1}$ for some activation function $\sigma$, then the steady state of the k$^{\rm th}$ block reads:

\begin{equation}
    s^k_\star := \sigma\left(x^k - \nabla_1 U^k(s^k_\star, \theta^k) \right),
    \label{app:eq:steady-state-block-k}
\end{equation}

which justifies the following fixed-point iteration scheme, when the block is influenced by some error signal $\delta s$ with nudging strength $\beta$:

\begin{equation}
    s^k_{\pm\beta, t + 1} \gets \sigma\left(x^k - \nabla_1 U^k(s^k_{\pm\beta, t}, \theta^k) \mp \beta \delta s^k \right).
    \label{app:eq:fixed-point-iteration}
\end{equation}

The dynamics prescribed by Eq.~\ref{app:eq:fixed-point-iteration} are also used for the inference phase with $\beta=0$. To further refine Eq.~(\ref{app:eq:fixed-point-iteration}), let us re-write Eq.~(\ref{app:eq:fixed-point-iteration}) with a layer index $\ell$ where $\ell \in [1, L_k]$ with $L_k$ being the number of layers in the k$^{\rm th}$ block, and replacing $x^k$ by its explicit expression:

\begin{equation}
\forall \ell = 1 \cdots L_k: \    s^k_{\ell, \pm\beta, t + 1} \gets \sigma\left(F^k\left(s^{k-1}_\star, \omega^{k-1}\right) - \nabla_{s^k_\ell} U^k(s^k_{\pm\beta, t}, \theta^k) \mp \beta \delta s^k \right).
    \label{app:eq:fixed-point-iteration-2}
\end{equation}

As done in past EP works \citep{ernoult2019updates, laborieux2021scaling, laborieux2022holomorphic, laborieux2023improving, scellier2024energy} and for notational convenience, we introduce the \emph{primitive function} of the k$^{\rm th}$ block as:

\begin{equation}
    \Phi^k\left(s^k, \theta^k, s^{k-1}_\star, \omega^k\right) := s^{k^\top}\cdot F^k\left(s^{k-1}_\star, \omega^{k}\right) - U^k(s^k, \theta^k)
    \label{app:def:primitive}
\end{equation}

such that Eq.~(\ref{app:eq:fixed-point-iteration-2}) re-writes:

\begin{equation}
  \forall \ell = 1 \cdots L_k:  s^k_{\ell, \pm\beta, t + 1} \gets \sigma\left(\nabla_{s^k_\ell} \Phi\left(s^k_{\pm \beta, t}, \theta^k, s^{k-1}_\star, \omega^k\right) \mp \beta \delta s^k \right).
    \label{app:eq:fixed-point-iteration-3}
\end{equation}

Eq.~(\ref{app:eq:fixed-point-iteration-3}) depicts a \emph{synchronous} scheme where all layers are simultaneously updated at each timestep. Another possible scheme, employed by \cite{scellier2024energy}, instead prescribes to \emph{asynchronously} update odd and even layers and was shown to speed up convergence in practice:

\begin{align}
\left\{
    \begin{array}{ll}
  \forall \ \mbox{odd} \ \ell \in \{1, \cdots, L_k\}:  &\ s^k_{\ell, \pm\beta, t + \frac{1}{2}} \gets \sigma\left(\nabla_{s^k_\ell} \Phi\left(s^k_{\pm\beta, t}, \theta^k, s^{k-1}_\star, \omega^k\right) \mp \beta \delta s^k \right),\\
  \forall \ \mbox{even} \ \ell \in \{1, \cdots, L_k\}:  & \ s^k_{\ell, \pm\beta, t + 1} \gets \sigma\left(\nabla_{s^k_\ell} \Phi\left(s^k_{\pm\beta, t + \frac{1}{2}}, \theta^k, s^{k-1}_\star, \omega^k\right) \mp \beta \delta s^k \right).
    \end{array}
\right.
\end{align}

We formally depict this procedure as the subroutine \texttt{Asynchronous} inside Alg.~\ref{alg:asynchronous}. In practice, we observe that it was more practical to use a \emph{fixed} number of iterations rather than using a convergence criterion with a fixed threshold.

\begin{algorithm}[ht!]{
\emph{Input}: $T$, $\theta^k$, $\omega^k$, $s^{k-1}_\star$, $\beta$, $\delta s^k$ \\
\emph{Output}: $s^k_\beta$
}
    \caption{\texttt{Asynchronous} (for all blocks {\bfseries until penultimate})}
    \label{alg:asynchronous}
    \begin{algorithmic}[1]
    \State $s^k \gets 0 $
    \For{$t=1\cdots T$}
        \State $\forall \ \mbox{odd} \ \ell \in \{1, \cdots, L_k\}:  \ s^k_{\ell, \beta} \gets \sigma\left(\nabla_{s_\ell^k} \Phi\left(s^k_{\beta}, \theta^k, s^{k-1}_\star, \omega^k\right) - \beta \delta s^k \right)$
        \State $\forall \ \mbox{even} \ \ell \in \{1, \cdots, L_k\}:  \ s^k_{\ell, \beta} \gets \sigma\left(\nabla_{s^k_\ell} \Phi\left(s^k_{\beta}, \theta^k, s^{k-1}_\star, \omega^k\right) - \beta \delta s^k \right)$
    \EndFor
    \end{algorithmic}
\end{algorithm}

\paragraph{Nudging the last block.} From looking at the procedure prescribed by Theorem~\ref{theorem:main-result} and algorithms thereof (Alg.~\ref{alg:implicit-bp-ep-chain}, Alg.~\ref{alg:explicit-bp-ep-chain}), all the error signals used to nudge the EB blocks are \emph{stationary}, including the top-most block where the loss error signal is fed in. Namely, the augmented energy function of the last block reads as:

\begin{equation}
\mathcal{F}^{N-1}(s^{N-1}, \theta^{N-1}, x^{N-1}_\star, \beta) := E^{N-1}(s^{N-1}, \theta^{N-1}, x^{N-1}_\star) + \beta s^{{N-1}^\top}\cdot \nabla_{s^{N-1}} \ell(\hat{o}_\star, y),
\label{eq:augmented-E-last-block-linearized}
\end{equation}

where $\hat{o}_\star := F^N\left(s^{N-1}_\star, \omega^{N}\right)$ is \emph{constant}. Up to a constant, Eq.~(\ref{eq:augmented-E-last-block}) uses the cost function \emph{linearized around} $s^{N-1}_\star$ instead of the cost function itself. This is, however, in contrast with most EP implementations where the nudging force acting upon the EB block is usually \emph{elastic}, i.e. the nudging depends on the current state of the EB block. More precisely, instead of using Eq.~(\ref{eq:augmented-E-last-block-linearized}), we instead use:

\begin{equation}
\mathcal{F}^{N-1}(s^{N-1}, \theta^{N-1}, x^{N-1}_\star, \beta) := E^{N-1}(s^{N-1}, \theta^{N-1}, x^{N-1}_\star) + \beta \ell(F^{N}(s^{N-1}, \omega^N), y),
\label{eq:augmented-E-last-block}
\end{equation}

This results in the following asynchronous fixed-point dynamics for the last block:

\begin{align*}
\left\{
    \begin{array}{ll}
  \forall \ \mbox{odd} \ \ell \in \{1, \cdots, L_k\}:  &\ s^k_{\ell, \pm\beta, t + \frac{1}{2}} \gets \sigma\left(\nabla_{s^k_\ell} \Phi\left(s^k_{\pm\beta, t}, \theta^k, s^{k-1}_\star, \omega^k\right) \mp \beta \nabla_{s^k} \ell(s^k_{\pm \beta, t}, y) \right),\\
  \forall \ \mbox{even} \ \ell \in \{1, \cdots, L_k\}:  & \ s^k_{\ell, \pm\beta, t + 1} \gets \sigma\left(\nabla_{s^k_\ell} \Phi\left(s^k_{\pm\beta, t + \frac{1}{2}}, \theta^k, s^{k-1}_\star, \omega^k\right) \mp \beta \nabla_{s^k} \ell(s^k_{\pm \beta, t}, y) \right).
    \end{array}
\right.
\end{align*}

The resulting \texttt{Asynchronous} subroutine, applying for the last block, is depicted inside Alg.~\ref{alg:asynchronous-last-block}.

\begin{algorithm}[ht!]{
\emph{Input}: $T$, $\theta^{N-1}$, $\omega^{N-1}$, $\omega^N$, $s^{k-1}_\star$, $\beta$, $\ell$ (cost function), $y$ \\
\emph{Output}: $s^{N-1}_\beta$
}
    \caption{\texttt{Asynchronous} (for {\bfseries last} block)}
    \label{alg:asynchronous-last-block}
    \begin{algorithmic}[1]
    \State $s^{N-1} \gets 0 $
    \For{$t=1\cdots T$}
        \State $\forall \ \mbox{odd} \ \ell \in \{1, \cdots, L_N\}$: 
        \State $\quad s^{N-1}_{\ell, \beta} \gets \sigma\left(\nabla_{s_\ell^{N-1}} \Phi\left(s^{N-1}_{\beta}, \theta^{N-1}, s^{N-2}_\star, \omega^{N-1}\right) - \beta \nabla_{s^{N-1}_\ell} \ell(F^N(s^{N-1}, \omega^N), y) \right)$
        \State $\forall \ \mbox{even} \ \ell \in \{1, \cdots, L_N\}$: 
        \State $\quad s^{N-1}_{\ell, \beta} \gets \sigma\left(\nabla_{s_\ell^{N-1}} \Phi\left(s^{N-1}_{\beta}, \theta^{N-1}, s^{N-2}_\star, \omega^{N-1}\right) - \beta \nabla_{s^{N-1}_\ell} \ell(F^N(s^{N-1}, \omega^N), y) \right)$
    \EndFor
    \end{algorithmic}
\end{algorithm}

\paragraph{Readout.} \cite{laborieux2021scaling} introduced the idea of the ``readout'' whereby the last linear layer computing the loss logits is \emph{not} part of the EB free block dynamics but simply ``reads out'' the state of the penultimate block. In all our experiments we use such a readout in combination with the cross entropy loss function. Using our formalism, our readout is simply the last feedforward transformation used inside $\ell$, namely $F^{N}(\cdot, \omega^N)$.

\paragraph{Deep Hopfield Networks (DHNs).} In our experiments, we used weight matrices of the form:

\begin{equation}
\theta^k = \begin{bmatrix}
        0 & \theta_1^{k^\top} & 0 & & \\
        \theta_1^k & 0 & \theta_2^{k^\top} & & \\
        0 & \theta_2^k & \ddots & \ddots & \\
          &          & \ddots & 0 & \theta_L^{k^\top} \\
         & & & \theta_L^k & 0   
    \end{bmatrix},
\end{equation}

whereby each layer $\ell$ is only connected to its adjacent neighbors. Therefore, fully connected and convolutional DHNs with $L$ layers have an energy function of the form:

\begin{align}
    U^k_{\rm FC}(s^k, \theta^k) &:= -\frac{1}{2}s^{k\top} \cdot \theta^k \cdot s^k = -\frac{1}{2} \sum_{\ell} s^{k^\top}_{\ell + 1} \cdot \theta_\ell^k \cdot s^k_{\ell} \\
    U^k_{\rm CONV}(s^k, \theta^k) &:= -\frac{1}{2}s^k \bullet \left(\theta^k \star s^k\right)  = -\frac{1}{2} \sum_{\ell} s^{k}_{\ell + 1} \bullet \left( \theta_\ell^k \star s^k_{\ell}\right)
\end{align}

\paragraph{Detailed inference algorithm.} With the aforementioned details in hand, we re-write the inference algorithm Alg.~\ref{alg:inference-ff-ebm} presented in the main as a \texttt{Forward} subroutine.

\begin{algorithm}[H]{
\emph{Input:} $T$, $x$, $W = \{\theta^1, \omega^1, \cdots \omega^N \}$ \\
\emph{Output}: $s^1, \cdots, s^{N-1}$ or $\hat{o}$ depending on the context
}
    \caption{\texttt{Forward}}
    \label{alg:detailed-inference-ff-ebm}
    \begin{algorithmic}[1]
    \State $s^0 \gets x$
    \For{$k=1 \cdots N - 1$}
        \State $s^k \gets \texttt{Asynchronous}\left(T, \theta^k, \omega^k, s^{k-1}\right)$\Comment{Alg.~\ref{alg:asynchronous}}
    \EndFor
    \State $\hat{o} \gets F^{N}\left(s, \omega^N\right)$
    \end{algorithmic}
\end{algorithm}

\paragraph{Detailed implicit EP-BP chaining algorithm.}
We provide a detailed implementation of our algorithm presented in the main (Alg.~\ref{alg:implicit-bp-ep-chain}) in Alg.~\ref{alg:implicit-bp-ep-chain-detailed}. As usually done for EP experiments, we always perform a ``free phase'' to initialize the block states (\texttt{Forward} subroutine, Alg.~\ref{alg:detailed-inference-ff-ebm}). Then, two nudged phases are applied to the last block and parameter gradients subsequently computed, as done classically (\texttt{BlockGradient} subroutine for the last block, Alg.~\ref{alg:block-grad-last-block}), with an extra computation to compute the error current to be applied to the penultimate block ($\delta s^{N-2}$). Then, the same procedure is recursed backward through blocks (Alg.~\ref{alg:block-grad}), until reaching first block.

\begin{algorithm}[ht!]{
\emph{Input:} $T$, $s^{N-2}_\star$, $\theta^{N-1}$, $\omega^{N-1}$, $\omega^N$, $\beta$, $\ell$, $y$\\
\emph{Output:$\delta s^{N-2}$}
}
    \caption{\texttt{BlockGradient} (for \bfseries{last} block)}
    \label{alg:block-grad-last-block}
    \begin{algorithmic}[1]
    \State $s^{N-1}_\beta \gets \texttt{Asynchronous}\left(T, \theta^{N-1}, \omega^{N-1}, \omega^N,  \beta, \ell , y\right)$ \Comment{Alg.~\ref{alg:asynchronous-last-block}}
    \State $s^{N-1}_{-\beta} \gets \texttt{Asynchronous}\left(T, \theta^{N-1}, \omega^{N-1}, \omega^N,  -\beta, \ell , y\right)$
    \State $g_{\omega^{N}} \gets \frac{1}{2}\left(\nabla_{s^{N-1}}\ell(F^{N}\left(s^{N-1}_\beta, \omega^N\right)) +  \nabla_{s^{N-1}}\ell(F^{N}\left(s^{N-1}_{-\beta}, \omega^N\right))\right)$
    \State $ g_{\theta^{N-1}} \gets \frac{1}{2\beta}\left(\nabla_{2} \widetilde{E}^{N-1}(s_\beta^{N-1}, \theta^{N-1}, s^{N-2}_\star, \omega^{N-1}) - \nabla_{2} \widetilde{E}^{N-1}(s^{N-1}_{-\beta}, \theta^{N-1}, s^{N-2}_\star, \omega^{N-2})\right)$
    \State $g_{\omega^{N-1}} \gets \frac{1}{2\beta}\left(\nabla_{4} \widetilde{E}^{N-1}(s_\beta^{N-1}, \theta^{N-1}, s^{N-2}_\star, \omega^{N-1}) - \nabla_{4} \widetilde{E}^{N-1}(s^{N-1}_{-\beta}, \theta^{N-1}, s^{N-2}_\star, \omega^{N-2})\right)$ 
    \State $ \delta s^{N-2} \gets \frac{1}{2\beta}\left(\nabla_{3} \widetilde{E}^{N-1}(s_\beta^{N-1}, \theta^{N-1}, s^{N-2}_\star, \omega^{N-1}) - \nabla_{3} \widetilde{E}^{N-1}(s^{N-1}_{-\beta}, \theta^{N-1}, s^{N-2}_\star, \omega^{N-2})\right)$
    \end{algorithmic}
\end{algorithm}

\begin{algorithm}[ht!]{
\emph{Input:} $T$, $s^{k-1}_\star$, $\theta^{k}$, $\omega^{k}$, $\beta$, $\delta s$\\
\emph{Output:$\delta s^{k-1}$}
}
    \caption{\texttt{BlockGradient} (for all blocks {\bfseries until penultimate})}
    \label{alg:block-grad}
    \begin{algorithmic}[1]
    \State $s^{k}_\beta \gets \texttt{Asynchronous}\left(T, \theta^{k}, \omega^{k}, \beta, \delta s\right)$ \Comment{Alg.~\ref{alg:asynchronous}}
    \State $s^{k}_{-\beta} \gets \texttt{Asynchronous}\left(T, \theta^{k}, \omega^{k}, -\beta, \delta s\right)$
    \State $ g_{\theta^{k}} \gets \frac{1}{2\beta}\left(\nabla_{2} \widetilde{E}^{k}(s_\beta^{k}, \theta^{k}, s^{k-1}_\star, \omega^{k}) - \nabla_{2} \widetilde{E}^{k}(s^{k}_{-\beta}, \theta^{k}, s^{k-1}_\star, \omega^{k})\right)$
    \State $ g_{\omega^{k}} \gets \frac{1}{2\beta}\left(\nabla_{4} \widetilde{E}^{k}(s_\beta^{k}, \theta^{k}, s^{k-1}_\star, \omega^{k}) - \nabla_{4} \widetilde{E}^{k}(s^{k}_{-\beta}, \theta^{k}, s^{k-1}_\star, \omega^{k})\right)$
    \State $ \delta s^{k-1} \gets \frac{1}{2\beta}\left(\nabla_{3} \widetilde{E}^{k}(s_\beta^{k}, \theta^{k}, s^{k-1}_\star, \omega^{k}) - \nabla_{3} \widetilde{E}^{k}(s^{k}_{-\beta}, \theta^{k}, s^{k-1}_\star, \omega^{k})\right)$
    \end{algorithmic}
\end{algorithm}

\begin{algorithm}[ht!]
    \caption{Detailed implicit BP-EP gradient chaining}
    \label{alg:implicit-bp-ep-chain-detailed}
    \begin{algorithmic}[1]
    \State $s^1_\star, \cdots, s^{N-1}_\star \gets \texttt{Forward}\left(T_{\rm free}, x, W\right)$ \Comment{Alg.~\ref{alg:detailed-inference-ff-ebm}}
    \State $\delta s \gets \texttt{BlockGradient}\left(T_{\rm nudge}, s^{N-2}_\star, \theta^{N-1}, \omega^{N-1}, \omega^{N}, \beta, \ell, y \right)$ \Comment{Alg.~\ref{alg:block-grad-last-block}}
    \For{$k=N - 2 \cdots 1$}
        \State $\delta s \gets \texttt{BlockGradient}\left(T_{\rm nudge}, s^{k-1}_\star, \theta^k, \omega^k, \beta, \delta s\right)$ \Comment{Alg.~\ref{alg:block-grad}}
    \EndFor
    \end{algorithmic}
\end{algorithm}

\paragraph{Details about the implicit differentiation baseline.}
We describe our implementation of Implicit Differentiation (ID) inside Alg.~\ref{alg:id}. First, we relax all blocks sequentially to equilibrium following Alg.~\ref{alg:detailed-inference-ff-ebm} and we do not track gradients throughout this first phase, using $T_{\rm free}$ fixed-point iteration steps per block. \emph{Then}, initializing the block states with those computed at the previous step, we re-execute the same procedure (still with Alg.~\ref{alg:detailed-inference-ff-ebm}), this time \emph{tracking gradients} and using $T_{\rm nudge}$ steps fixed-point iteration steps for each block. Then, we use automatic differentiation to backpropagate through the last $T_{\rm nudge}$ steps for each block, namely backpropagating, backward in time, \emph{through equilibrium}. 

\begin{algorithm}[ht!]
    \caption{Our implementation of ID}
    \label{alg:id}
    \begin{algorithmic}[1]
    \State Without tracking gradients:\Comment{e.g. \texttt{with torch.no\_grad()}}
    \State \quad $s^1_\star, \cdots, s^{N-1}_\star \gets \texttt{Forward}\left( T_{\rm free}, x, W \right)$ \Comment{Alg.~\ref{alg:detailed-inference-ff-ebm}}
    \State Initialize block states at $s^1_\star, \cdots, s^{N-1}_\star$
    \State $\hat{o}_\star \gets \texttt{Forward}\left( T_{\rm nudge}, x, W \right)$  \Comment{This time gradients are tracked}
    \State  $\mathcal{C} \gets \ell(\hat{o}_\star, y)$
    \State  Backpropagate $\mathcal{C}$ backward through the last $T_{\rm nudge}$ steps for each block\Comment{e.g. \texttt{C.backward()}}
    \end{algorithmic}
\end{algorithm}

\newpage

\subsection{Details about the static gradient analysis}
\paragraph{Single block case.}  Let us consider a single block with parameters $\theta$ and state $s$. Earlier, we showed that the (free) dynamics of a block read in the general form as:

\begin{equation}
  s_{t + 1} \gets \nabla_1 \Phi\left(s_{t}, \theta, x\right),
  \label{eq:recurrent-dyn}
\end{equation}

where we have ignored the activation function, as done by \cite{ernoult2019updates}. Therefore, the computational graph for Eq.~(\ref{eq:recurrent-dyn}) is that of a recurrent neural network with a static input ($x$) where the parameters $\theta$ are shared \emph{across time steps}. Therefore, one could view Eq.~(\ref{eq:recurrent-dyn}) as:

\begin{equation}
    s_{t + 1} \gets \nabla_1 \Phi\left(s_{t}, \theta(t) = \theta, x \right).
  \label{eq:recurrent-dyn-2}
\end{equation}

At the end of the recurrent dynamics Eq.~(\ref{eq:recurrent-dyn-2}), a loss is computed based on the output of the last block, namely:

\begin{equation}
    \mathcal{C} = \ell\left(s_T, y\right),
\end{equation}

where $s_T$ is the state of the last block after $T$ iterations of the recurrent dynamics Eq.~(\ref{eq:recurrent-dyn-2}). In principle, the gradient of the loss $\mathcal{C}$ with respect to $\theta$ reads as:

\begin{equation}
    g_{\theta} = d_{\theta(1)} \mathcal{C} + d_{\theta(2)} \mathcal{C} + \cdots + d_{\theta(T)} \mathcal{C}.
\end{equation}

In other words,  the gradient of the loss $\mathcal{C}$ with respect to $\theta$ is the sum of all ``sensitivities'' of the loss to the same parameter taken at all timesteps where it is involved. One can define the \emph{truncated} gradient at $T-t$, i.e. \emph{$t$ steps backward in time from $T$} as:

\begin{equation}
    \hat{g}^{\rm AD}_\theta(t) := \sum_{k=0}^{t-1} d_{\theta(T - k)} \mathcal{C},
\end{equation}

such that $\hat{g}^{\rm AD}_\theta(T) = g_{\theta}$. AD stands for ``automatic differentiation'' because this is how these gradients may be computed in practice -- it can also be viewed as an instantiation of \emph{backpropagation through time} (BPTT) with a static input. Then, writing the nudged dynamics prescribed by EP as:

\begin{equation}
    s_{t + 1}^{\pm\beta} := \nabla_1 \Phi\left(s^{\pm\beta}_{t}, \theta, x \right) \mp \beta \nabla_1 \ell(s_t^\beta, y),
  \label{eq:recurrent-dyn-nudged}
\end{equation}

we define the truncated EP gradient estimate at timestep t \emph{through the second phase} as:

\begin{equation}
    \hat{g}^{\rm EP}_\theta(\beta, t) := \frac{1}{2\beta}\left(\nabla_2 \Phi\left(s^\beta_t, \theta, x\right) -  \nabla_2 \Phi\left(s^{-\beta}_t, \theta, x\right)\right).
\end{equation}

Furthermore, we assume that $s$ is already at its steady state $s_\star$ over the last $K$ steps, i.e. $T-K$, $T-K-1$, ..., $T$, so that AD here occurs \emph{through equilibrium}, then the following equality holds \citep{ernoult2019updates}:

\begin{equation}
  \forall t = 1 \cdots K: \quad \lim_{\beta \to 0}\hat{g}^{\rm EP}_\theta(\beta, t) = \hat{g}^{\rm AD}_\theta(t)
  \label{eq:gdd}
\end{equation}

This property was called the \emph{Gradient Descending Updates} (GDU) property. In practice, the GDU property is a tremendously helpful sanity check to debug code to make sure EP gradients are correctly computed. Our coding work heavily relied on this property.

\paragraph{Multiple blocks.} In ff-EBMs, the GDU property may immediately hold for the last block. For penultimate block backwards, the sole difference compared to the previous paragraph is simply the nudging force that is applied to the model, for which Eq.~(\ref{eq:recurrent-dyn-nudged}) may simply be changed into:

\begin{equation}
    s_{t + 1}^{\pm\beta} \gets \nabla_1 \Phi\left(s^{\pm\beta}_{t}, \theta, x \right) \mp \beta \delta s,
  \label{eq:recurrent-dyn-nudged-pen}
\end{equation}

where $\delta s$ is a \emph{constant} error signal. Therefore, so long as the error signal $\delta s$ computed by EP matches that computed by AD (which we proved inside Theorem~\ref{theorem:main-result-formal}), the GDU property should still hold.




\newpage
\subsection{Experimental Details}
\subsubsection{Datasets}

Simulations were run on CIFAR-10, CIFAR-100 and Imagenet32 datasets, all consisting of color images of size  32 $\times$ 32 pixels. CIFAR-10  \cite{krizhevsky2009learning} includes 60,000 color images of objects and animals . Images are split into 10 classes, with 6,000 images per class. Training data and test data include 50,000 images, and 10,000 images respectively. Cifar-100 \cite{krizhevsky2009learning} likewise comprises 60,000, and features a diverse set of objects and animals split into 100 distinct classes. Each class contains 600 images. Like CIFAR-10, the dataset is divided into a training set with 50,000 images and a test set containing the remaining 10,000 images. The ImageNet32 dataset \cite{chrabaszcz2017downsampled} is a downsampled version of the original ImageNet dataset \cite{russakovsky2015imagenet} containing 1,000 classes with 1,281,167 training images, 50,000 validation images,  100,000 test images and 1000 classes.

\subsubsection{Data Pre-processing}
All data were normalized according to statistics shown in \ref{tab:data_normalization}and augmented with 50 \% random horizontal flips. Images were randomly cropped and padded with the last value along the edge of the image. 

\begin{table}[ht]
\centering
\caption{Data Normalization. Input images were normalized by conventional mean ($\mu$) and standard deviation ($\sigma$) values for each dataset. All images used are color (three channels).}
\label{tab:data_normalization}
\begin{tabular}{@{}lll@{}}
\toprule
Dataset & Mean ($\mu$) & Std ($\sigma$) \\
\midrule
CIFAR-10/100 & (0.4914, 0.4822, 0.4465) & (0.2470, 0.2435, 0.2616) \\
Imagenet32 & (0.485, 0.456, 0.406) & (0.3435, 0.336, 0.3375) \\
\bottomrule
\end{tabular}
\end{table}

\subsubsection{Simulation Details}
\paragraph{Weight Initialization}
Equilibrium propagation, similar to other machine learning paradigms reliant on fixed-point iteration \citep{bai2019deep}, is highly sensitive to initialization statistics \citep{agarwala2022deep}, hence conventionally difficult to tune, and requiring many iterations for the three relaxation phases. Initialization of weights as Gaussian Orthogonal Ensembles (GOE) ensures better stability  (reduced variance) and, combined with other stabilizing measures discussed below, empirically yields faster convergence. 
\\
According to GOE, weights are intialized as:\\

\[ W_{ij} \sim \begin{cases}
\mathcal{N}(0, \frac{V}{N}), & \text{if } i \neq j \\
\mathcal{N}(0, \frac{2V}{N}), & \text{if } i = j
\end{cases} \]

where \(\mathcal{N}(\mu, \sigma^2)\) denotes a Gaussian (normal) distribution with mean \(\mu\) and variance \(\sigma^2\).

\paragraph{State Initialization}
All states are initialized as zero matrices.
\paragraph{Activation Functions}
An important detail for faithful reproduction of these experiments is the choice and placement of activation functions applied during the iterative root-finding procedure. In the literature, clamping is conventionally applied at each layer, with the exception of the final layer, where it is sometimes included e.g. \cite{scellier2024energy}, and sometimes omitted \cite{laborieux2021scaling}. For these experiments we use both the standard hard activation employed in \cite{ernoult2019updates} and \citep{scellier2024energy}, and the more conservative one given in \citep{laborieux2021scaling}. Details are given in \ref{tab:activation}. 

\begin{table}[ht]
\centering
\caption{Activation functions}
\label{tab:activation}
\begin{tabular}{@{}lll@{}}
\toprule
Name & Description  & Source \\
\midrule
ernoult & $\sigma({x}) = \max(\min({x}, 1), 0)$  & \citep{ernoult2019updates} \\
laborieux & $\sigma({x}) = \max(\min(0.5 \times {x}, 1), 0)$  & \citep{laborieux2021scaling} \\
\bottomrule
\end{tabular}
\end{table}
In practice, we find that the laborieux activation, in conjunction with GOE, and the omission of clamping at the output of \emph{each block} significantly enhances gradient stability and speeds convergence in the multiple block setting. In the interest of multi-scale uniformity and consistency with previous literature \citep{laborieux2021scaling}  \cite{ernoult2019updates}, however, we apply clamping using the ernoult activation on \emph{all layers} in our 6-layer architecture. \\\\ For the scaling experiments, we apply laborieux activation at every layer \emph{except} the output of each block.  For the 12-layer splitting experiment, we do the same, omitting clamping from the output of the final layer of \emph{each} block in the block-size$=$4 and block-size$=$3 experiments. However, in the block-size$=$2 case we clamp the output of the second and fourth blocks to preserve dynamics of the block-size$=$4 split. Such consistency is not possible for the block-size$=$3 experiment, constituting a possible discrepancy in the scaling dynamics.

\paragraph{Cross-entropy Loss and Softmax Readout}

Following \citep{laborieux2021scaling}, all models were implemented such that the output $y$ is removed from the system (e.g. not included in the relaxation dynamics) but is instead the function of a weight matrix: $W_{\text{out}}$ of size $\text{dim}(y) \times \text{dim}(s)$, where $s$ is the state of the final layer. For each time-step $t$, $ \hat{y}_t = \text{softmax}(W_{\text{out}}s_t). $

The cross-entropy cost function associated with the softmax readout is then:

$$ l(s, y, W_{\text{out}}) = -\sum_{c=1}^C y_c \log(\text{softmax}_c(W_{\text{out}} \cdot s)).  $$
\\ It is important to note that by convention we refer to architectures throughout this text to the exclusion of the softmax readout, which is technically an additional layer, though not involved in the relaxation process.

\paragraph{Architecture}
Details of architectures used in the experiments reported in \ref{table:split}  and \ref{tab:scaling} are given in table \ref{tab:architectures}. All convolutional layers used in experiments are of kernel size 3 and stride and padding 1. Max-pooling was applied with a window of $2 \times 2$ and stride of 2. For the 6-layer model used in the \ref{table:split} batch norm was applied \emph{after} the first layer convolution and pooling operation. All other models in both experiments use batch-normalization on the first layer of each block \emph{after} convolution and pooling (where applied). These details exclude the Linear Softmax readout of size ${n}$ classes.

\begin{table}[ht!]
\centering
\caption{Architectures for training experiments.}
\label{tab:architectures}
\footnotesize
\setlength{\tabcolsep}{4pt}

\begin{tabular*}{\textwidth}{@{\extracolsep{\fill}}llll} 
   \toprule
   Hyperparameters & \multicolumn{3}{c}{\hspace*{-1.5cm} Architecture} \\ 
                   & 6-layer  & 12-layer & 15-layer \\ \midrule

Kernel size & 3 & 1 & 1  \\
Stride & 1  & 1 & 1\\
Padding & 1 & 1 & 1 \\
SoftPool stride & 2 & 2 & 2 \\  
SoftPool window & 2 $\times$ 2 & 2 $\times$ 2 & 2 $\times$ 2 \\
  Channel sizes & 
  \begin{tabular}[t]{@{}l@{}}      
     128 \\
      256 $\times$ 2 \\
      512 $\times$ 2 \\
      256 (Linear)
    \end{tabular} & 
  \begin{tabular}[t]{@{}l@{}}     
      128 $\times$ 4 \\
      256 $\times$ 4 \\
      512 $\times$ 4
    \end{tabular} & 
  \begin{tabular}[t]{@{}l@{}}     
      64 $\times$ 2 \\
      128 $\times$ 4 \\
      256 $\times$ 4\\
      512 $\times$ 5\\
    \end{tabular} \\
Pooling Applied & [1-1-1-1-0-0]& [0-0-1-0-1-0-0-0-1-0-0-0-]&[0-0-1-0-0-0-1-0-0-1-0-0-0-0] \\
\hline 
\\ \bottomrule

\end{tabular*}
\end{table}

\paragraph{Hyperparameters}
Hyperparameters and implementation details for \ref{tab:scaling} and \ref{table:split} are given in table \ref{tab:hyperparameters}. Note that all architectural details for the 12-layer models are identical across splitting and scaling experiments. Additionally, identical hyperparameters were used for Cifar-100 and Imagenet experiments of \ref{tab:scaling}. Unlike previous literature, the use of GOE intialization eliminates the need for separate layerwise learning rates and initialization parameters. One noteworthy detail is that only 100 epochs were used for the larger model for \ref{tab:scaling} compared with 200 epochs for the smaller 12-layer model. This was due to prohibitively long run-time of training the larger model. Noteably performance still significantly improves with decreased overall runtime.

\begin{table}[ht!]
\centering
\footnotesize  
\setlength{\tabcolsep}{4pt} 
\caption{Hyperparameters used for training experiments.}
\label{tab:hyperparameters}
\begin{tabular*}{\textwidth}{@{\extracolsep{\fill}}lllll}
    \toprule
    Experiment & \multicolumn{2}{c}{Splitting(Cifar-10)} & \multicolumn{2}{c}{Scaling(CIFAR-100/Imagenet)} \\
                    &  6-Layer   & 12-Layer   & 12-Layer   & 15-Layer   \\ \midrule
    Batch size      & \multicolumn{2}{c}{128}     & \multicolumn{2}{c}{256}     \\
    Activation      & \multicolumn{2}{c}{ernoult}     & \multicolumn{2}{c}{laborieux}    \\
    
    Epochs          & 200                  & 200                     & 200                  & 100                     \\
    $\beta$         & 0.2                 & 0.2                   & 0.2                 & 0.2                   \\
    $T_{\text{free}}$    & 60                  & 20                   & 15                  & 15                   \\
    $T_{\text{nudge}}$   & 20                  & 5                    & 5                  & 5    \\
    
    V *   &   $8.4e^{-4}$   & $5.9e^{-5}$ & $4.9e^{-5}$& $1e^{-5}$\\

    Initial LRs**    & $1e^{-4}$   & $5e^{-5}$   & $5e^{-5}$     & $2e^{-5}$   \\
    Final LRs    &  $ 1e^{-6}$ & $ 1e^{-6}$   & $1e^{-7}$     & $1e^{-7}$    \\
    Weight decay    & $3e^{-4}$&$3e^{-4}$&$3e^{-4}$&$3e^{-4}$            \\
                \\ \bottomrule
\end{tabular*}
\footnotesize{* All models used Gaussian Orthogonal weight initialization scheme parameterized by V\\** Learning rates were decayed with cosine annealing without restart [S7].}
\end{table}
\paragraph{Other details}
All experiments were run using Adam optimizer \citep{kingma2014adam}and Cosine Annealing scheduler\citep{loshchilov2016sgdr} with minimum learning rates indicated in \ref{tab:hyperparameters} and maximum T equal to epochs (i.e. no warm restarts). Code was implemented in Pytorch 2.0 and all simulations were run on A-100 SXM4 40GB GPUs.
\end{appendices}

\newpage

\newpage
\clearpage
\section*{NeurIPS Paper Checklist}

\begin{enumerate}

\item {\bf Claims}
    \item[] Question: Do the main claims made in the abstract and introduction accurately reflect the paper's contributions and scope?
    \item[] Answer: \answerYes{}

\item {\bf Limitations}
    \item[] Question: Does the paper discuss the limitations of the work performed by the authors?
    \item[] Answer: \answerYes{} 
    \item[] Justification: we have a dedicated paragraph in the ``Discussion'' section of the paper which explicitly mentions limitations and future work.

\item {\bf Theory Assumptions and Proofs}
    \item[] Question: For each theoretical result, does the paper provide the full set of assumptions and a complete (and correct) proof?
    \item[] Answer: \answerYes{}
    \item[] Justification: All proofs are included in the appendix.

    \item {\bf Experimental Result Reproducibility}
    \item[] Question: Does the paper fully disclose all the information needed to reproduce the main experimental results of the paper to the extent that it affects the main claims and/or conclusions of the paper (regardless of whether the code and data are provided or not)?
    \item[] Answer: \answerYes{}
     \item[] Justification: detailed information to reproduce experiment is provided in the appendix.

\item {\bf Open access to data and code}
    \item[] Question: Does the paper provide open access to the data and code, with sufficient instructions to faithfully reproduce the main experimental results, as described in supplemental material?
    \item[] Answer: \answerNo{} 
    \item[] Justification: We only employed public datasets and will release our code at the end of the reviewing process. In the meanwhile, we provided very detailed pseudo-algorithms in the appendix.

\item {\bf Experimental Setting/Details}
    \item[] Question: Does the paper specify all the training and test details (e.g., data splits, hyperparameters, how they were chosen, type of optimizer, etc.) necessary to understand the results?
    \item[] Answer: \answerYes{}
     \item[] Justification: all these details are provided in the appendix. 

\item {\bf Experiment Statistical Significance}
    \item[] Question: Does the paper report error bars suitably and correctly defined or other appropriate information about the statistical significance of the experiments?
    \item[] Answer: \answerYes{} 
    \item[] Justification: we perform each of our training simulations on 3 different seeds and reported mean and standard deviation of the resulting performance. 

\item {\bf Experiments Compute Resources}
    \item[] Question: For each experiment, does the paper provide sufficient information on the computer resources (type of compute workers, memory, time of execution) needed to reproduce the experiments?
    \item[] Answer: \answerYes{} 
    \item[] Justification: this information is also inside our appendix.
    
\item {\bf Code Of Ethics}
    \item[] Question: Does the research conducted in the paper conform, in every respect, with the NeurIPS Code of Ethics \url{https://neurips.cc/public/EthicsGuidelines}?
    \item[] Answer: \answerYes{} 

\item {\bf Broader Impacts}
    \item[] Question: Does the paper discuss both potential positive societal impacts and negative societal impacts of the work performed?
    \item[] Answer: \answerYes{} 
    \item[] Justification: we wrote a dedicated paragraph inside our ``Discussion'' section.

\item {\bf Safeguards}
    \item[] Question: Does the paper describe safeguards that have been put in place for responsible release of data or models that have a high risk for misuse (e.g., pretrained language models, image generators, or scraped datasets)?
    \item[] Answer: \answerNA{} 
    \item[] Justification: Our work poses no such risk at present as it only provides proof-of-concepts for systems which do not yet exist.

\item {\bf Licenses for existing assets}
    \item[] Question: Are the creators or original owners of assets (e.g., code, data, models), used in the paper, properly credited and are the license and terms of use explicitly mentioned and properly respected?
    \item[] Answer: \answerNA{} 
    \item[] Justification: this work does not use existing assets.

\item {\bf New Assets}
    \item[] Question: Are new assets introduced in the paper well documented and is the documentation provided alongside the assets?
    \item[] Answer: \answerNA{} 
 \item[] Justification: this paper does not release new assets.

\item {\bf Crowdsourcing and Research with Human Subjects}
    \item[] Question: For crowdsourcing experiments and research with human subjects, does the paper include the full text of instructions given to participants and screenshots, if applicable, as well as details about compensation (if any)? 
    \item[] Answer: \answerNA{} 
    \item[] Justification: our paper does not involve crowdsourcing nor research with human subjects.

\item {\bf Institutional Review Board (IRB) Approvals or Equivalent for Research with Human Subjects}
    \item[] Question: Does the paper describe potential risks incurred by study participants, whether such risks were disclosed to the subjects, and whether Institutional Review Board (IRB) approvals (or an equivalent approval/review based on the requirements of your country or institution) were obtained?
    \item[] Answer: \answerNA{} 
    \item[] Justification: our paper does not involve crowdsourcing nor research with human subjects.

\end{enumerate}

\end{document}